\documentclass[10pt,twocolumn,letterpaper]{article}

\usepackage{algorithm}
\usepackage{algorithmic}

\usepackage{iccv}
\usepackage{times}
\usepackage{epsfig}
\usepackage{graphicx}
\usepackage{amsmath}
\usepackage{amssymb}

\usepackage{amsthm}

\newtheorem{thm}{Theorem}

\newtheorem{defn}[thm]{Definition}
\newtheorem{prop}[thm]{Proposition}



\usepackage[pagebackref=true,breaklinks=true,letterpaper=true,colorlinks,bookmarks=false]{hyperref}

\newif\ifTR\TRtrue

\iccvfinalcopy 

\long\def\ignore#1{}

\def\calA{{\cal A}}
\def\calB{{\cal B}}

\def\calD{{\cal D}}
\def\calE{{\cal E}}

\def\calG{{\cal G}}

\def\calL{{\cal L}}
\def\calN{{\cal N}}
\def\calP{{\cal P}}

\def\calV{{\cal V}}

\newcommand{\by}{\mbox{\boldmath $y$}}

\def\oo{{\bf o}}

\def\myparagraph#1{\vspace{2pt}\noindent{\bf #1~~}}


\ificcvfinal\pagestyle{empty}\fi
\begin{document}

\title{Potts model, parametric maxflow and $k$-submodular functions
}

\author{Igor Gridchyn\\
IST Austria \\
{\tt\small igor.gridchyn@ist.ac.at}
\and
Vladimir Kolmogorov\\
IST Austria\\
{\tt\small vnk@ist.ac.at}
}

\maketitle


\ignore{
The problem of minimizing the Potts energy function frequently occurs in computer vision applications. One way to tackle this NP-hard problem was proposed by Kovtun [19, 20]. It identifies a part of an optimal solution by running k maxflow computations, where k is the number of labels. The number of "labeled" pixels can be significant in some applications, e.g. 50-93

To derive our technique, we generalize the algorithm of Felzenszwalb et al. [7] for Tree Metrics. We also show a connection to k-submodular functions from combinatorial optimization, and discuss k-submodular relaxations for general energy functions.
}

\begin{abstract}
The problem of minimizing the Potts energy function
 frequently occurs in computer vision applications.
One way to tackle this NP-hard problem was proposed by Kovtun~\cite{Kovtun:DAGM03,Kovtun:PhD}.
It identifies a part of an optimal solution by
running $k$ maxflow computations, where $k$ is the number of labels.
The number of ``labeled'' pixels can be significant in some applications, e.g.\ 50-93\% in our tests for stereo.
We show how to reduce the runtime to $O(\log k)$ maxflow computations
(or one {\em parametric maxflow} computation). 
Furthermore, the output of our algorithm allows to speed-up the subsequent alpha expansion for the unlabeled part, or can be used as it is for time-critical applications. 

To derive our technique, we generalize the algorithm of Felzenszwalb et al.~\cite{Felzenszwalb:TreeMetrics}
for {\em Tree Metrics}. We also show a connection to {\em $k$-submodular functions}
from combinatorial optimization, and discuss {\em $k$-submodular relaxations}
for general energy functions.
\end{abstract}

\section{Introduction}
This paper addresses the problem of minimizing an energy function
with {\em Potts} interaction terms. This energy has found
a widespread usage in computer vision after the seminal work
of Boykov et al.~\cite{BVZ:PAMI01} who proposed an efficient approximation
algorithm for this NP-hard problem called {\em alpha expansion}.

The algorithm of~\cite{BVZ:PAMI01} is based on the maxflow algorithm, also known as {\em graph cuts}.
Each iteration involves $k$ maxflow computations, where $k$ is the number of labels. 
Several techniques were proposed for improving the efficiency of these computations.
The most relevant to us is the method of Kovtun~\cite{Kovtun:DAGM03,Kovtun:PhD}
which computes a part of an optimal solution via $k$ maxflow computations.
We can then fix ``labeled'' nodes and run the alpha expansion algorithm
for the remaining nodes. Such scheme was a part of the 
``Reduce, Reuse, Recycle'' approach of Alahari et al.~\cite{Alahari:PAMI10}.

Our main contribution is to improve the efficiency of Kovtun's method
from $k$ maxflow computations to $\lceil 1+\log_2 k\rceil$ computations on graphs of equivalent sizes.
In some applications the method labels a significant fraction of nodes~\cite{Kovtun:DAGM03,Kovtun:PhD,Alahari:PAMI10}, 
so our techique gives a substantial speed-up. We may get an improvement even when
there are few labeled nodes: it is reported in \cite{Alahari:PAMI10} that
using flow from Kovtun's computations always speeds up the alpha expansion algorithm for unlabeled nodes.

The idea of our approach is to cast the problem as another minimization
problem with {\em Tree Metrics}, and then generalize the algorithm of Felzenszwalb et al.~\cite{Felzenszwalb:TreeMetrics}
for Tree Metrics by allowing more general unary terms. This generalization is our second contribution.
Finally, we discuss some connections to {\em $k$-submodular functions}.

\myparagraph{Other related work}
A theoretical analysis of Kovtun's approach was given by Shekhovtsov and Hlavac~\cite{Shekhovtsov:CSC12,Shekhovtsov:PhD}.
It was shown that the method in~\cite{Kovtun:DAGM03,Kovtun:PhD} does not improve on the alpha expansion
in terms of the quality of the solution: if a node is labeled
by Kovtun's approach then the alpha expansion would produce the same solution for this node
upon convergence (assuming that all costs are unique; see~\cite{Shekhovtsov:CSC12}
for a more general statement).
Similarly, Kovtun's approach 
does not improve on the standard Schlesinger's LP relaxation of the energy \cite{Shekhovtsov:CSC12}.

We also mention the ``FastPD'' method of Komodakis et al.~\cite{Komodakis07,Komodakis08}.
The default version of FastPD for the Potts energy produces the same answer as the alpha expansion algorithm
but faster, since it maintains not only primal variables (current solution)
but also dual variables (``messages''). Intuitively, it allows to reuse flow
between different maxflow computations. An alternative method for reusing
flow was used by Alahari et al.~\cite{Alahari:PAMI10}, who reported similar speed-ups.

\section{Preliminaries}\label{sec:prelim}
The Potts energy  for labeling $x\in\calL^V$ is given by
\begin{equation}
f(x)=\sum_{i\in V}f_i(x_i)+\sum_{\{i,j\}\in E}\lambda_{ij}[x_i\ne x_j] 
\label{eq:fPotts}
\end{equation}
Here $V$ is the set of nodes, $E$ is the set of edges, $\calL$ is the set of labels,
$\lambda_{ij}$ are non-negative constants and $[\cdot]$ is the {\em Iverson bracket}.
It is well-known that computing a minimizer of~\eqref{eq:fPotts} is NP-hard when $|\calL|\ge 3$ \cite{BVZ:PAMI01}.

Let us review the method of Kovtun~\cite{Kovtun:DAGM03,Kovtun:PhD} for obtaining
a part of an optimal solution.
(The method is applicable to general functions - see~\cite{Kovtun:DAGM03,Kovtun:PhD,Shekhovtsov:CSC12};
here we consider only the Potts energy, in which case the formulation simplifies considerably.)
For a label $a\!\in\!\calL$  denote $\bar a\!=\!\calL\!-\!\{a\}$, and  let $f_i(\bar a)\!=\!\min\limits_{b\in \bar a} f_i(b)$.
Define function $f^a:\{a,\bar a\}^V\!\rightarrow\! \mathbb R$ via
\begin{eqnarray}
f^a(y)=\sum_{i\in V}f_i(y_i)+\sum_{\{i,j\}\in E}\lambda_{ij}[y_j\ne y_i] 
\label{eq:faKovtun}
\end{eqnarray}
(A remark on notation: we typically
use letter $x$ for multi-valued labelings and $y$ for binary labelings).
\begin{thm}[\cite{Kovtun:DAGM03,Kovtun:PhD}]
Let $y\in\{a,\bar a\}^V$ be a minimizer of $f^a$. For any $x\in \calL^V$ there holds
$f(x^y)\le f(x)$ where labeling $x^y$ is defined via
$~(x^y)_i=\begin{cases}a&\mbox{if }y_i=a\\x_i&\mbox{if }y_i=\bar a\end{cases}~$ for $i\in V$.
Consequently, there exists minimizer $x^\ast\in\arg\min\limits_{x\in\calL^V} f(x)$
such that $x^\ast_i=a$ for all nodes $i\in V$ with $y_i=a$.
\label{th:Kovtun:po}
\end{thm}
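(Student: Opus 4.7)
My plan is to prove the inequality $f(x^y)\le f(x)$ by a pairing argument between the multi-label energy $f$ and the binary energy $f^a$, and then derive the existence of $x^\ast$ by applying the inequality to any minimizer of $f$.

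Given $x\in\calL^V$, I would introduce the auxiliary binary labeling $y'\in\{a,\bar a\}^V$ defined by $y'_i=a$ if $y_i=a$ and $x_i=a$, and $y'_i=\bar a$ otherwise (i.e.\ $y'$ is the ``meet'' of $y$ with the binary encoding of $x$ relative to $a$). The goal is the exchange inequality
\begin{equation*}
f(x)+f^a(y)\;\ge\;f(x^y)+f^a(y').
\end{equation*}
Since $y$ minimizes $f^a$, we have $f^a(y)\le f^a(y')$, so this inequality immediately yields $f(x^y)\le f(x)$.

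The exchange inequality would be verified term by term. For the unary part, for each $i\in V$ I would check $f_i(x_i)+f_i(y_i)\ge f_i((x^y)_i)+f_i(y'_i)$ in the four cases indexed by whether $y_i=a$ and whether $x_i=a$; three of these cases are equalities, and the only nontrivial one (namely $y_i=a$, $x_i\ne a$) reduces to $f_i(x_i)\ge f_i(\bar a)=\min_{b\ne a}f_i(b)$, which holds by the definition of $f_i(\bar a)$ and the assumption $x_i\ne a$. For the pairwise part, for each edge $\{i,j\}$ I would verify the inequality
\begin{equation*}
[x_i\ne x_j]+[y_i\ne y_j]\;\ge\;[(x^y)_i\ne (x^y)_j]+[y'_i\ne y'_j]
\end{equation*}
by case analysis on $(y_i,y_j)\in\{a,\bar a\}^2$ with sub-cases on $x_i,x_j$. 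The delicate case is $(y_i,y_j)=(a,\bar a)$: here the left-hand side equals $[x_i\ne x_j]+1$, while the right-hand side equals $[x_j\ne a]+[x_i=a]$, and a short table of sub-cases (on whether each of $x_i,x_j$ equals $a$ and whether they coincide) confirms the inequality. All other cases are straightforward.

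The conceptual heart of the argument, which I expect to be the main obstacle, is identifying the right auxiliary labeling $y'$. The choice $y'=y\wedge\pi(x)$ is natural because $f^a$ is submodular with respect to the lattice order on $\{a,\bar a\}^V$ (the Potts interactions with $\lambda_{ij}\ge 0$ are submodular and the unaries are modular), and the pairwise inequality above is essentially a reformulation of this submodularity augmented with a comparison between Potts terms on $\calL$ and on $\{a,\bar a\}$. Once the exchange inequality is established, the existence part of the theorem follows immediately: pick any $x\in\arg\min_{x\in\calL^V}f(x)$ and set $x^\ast=x^y$; then $f(x^\ast)\le f(x)$ forces $x^\ast$ to be a minimizer as well, and $x^\ast_i=a$ for every $i$ with $y_i=a$ by construction of $x^y$.
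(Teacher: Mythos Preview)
The paper does not give its own proof of this statement; it is quoted as a result of Kovtun~\cite{Kovtun:DAGM03,Kovtun:PhD} and used as a black box. So there is no ``paper's proof'' to compare against.

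Your argument is correct. The exchange inequality
\[
f(x)+f^a(y)\;\ge\;f(x^y)+f^a(y')
\]
with $y'_i=a$ iff $y_i=a$ and $x_i=a$ is exactly the right device, and your case analysis goes through: the unary case $(y_i,x_i)=(a,\ne a)$ is handled by $f_i(x_i)\ge f_i(\bar a)$, and the pairwise case $(y_i,y_j)=(a,\bar a)$ reduces to checking $[x_i\ne x_j]+1\ge[x_j\ne a]+[x_i=a]$, which holds in all sub-cases (the tight one being $x_i=x_j\ne a$, giving $1\ge 1$). The nonnegativity of $\lambda_{ij}$ is used when summing the pairwise inequalities. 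The deduction of the ``consequently'' part from the inequality is immediate, as you write.

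This style of proof---introducing a companion binary labeling and proving a four-point exchange inequality termwise---is the standard route for Kovtun-type partial optimality results, and is essentially how it is done in the cited works and in the later analyses by Shekhovtsov and Hlav\'a\v{c}. One cosmetic remark: you describe $y'$ as the ``meet'' $y\wedge\pi(x)$; that is the right intuition, but note that the orientation matters (with $a$ as the top element of $\{a,\bar a\}$, your $y'$ is indeed $y\wedge\pi(x)$), so it may be worth stating the order explicitly.
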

Kovtun's approach requires minimizing function $f^a$ for all $a\in\calL$.
A naive way to do this is to use $k$ maxflow computations on a graph with $|V|$ nodes and $|E|$ edges, where $k=|\calL|$.
To reduce this to $O(\log k)$ maxflow computations,
we will use the following strategy. First, we will define an auxiliary
function $g:\calD^V\rightarrow\mathbb R$ where $\calD=\calL\cup\{\oo\}$, $\oo\notin\calD$.
We will then present an efficient algorithm for minimizing $g$,
and show that a minimizer $x\in\arg\min\{g(x)\:|\:x\in\calD^V\}$
determines a minimizer $y\in\arg\min\{ f^a(y)\:|\:y\in\{a,\bar a\}^V\}$
for each $a\in\calL$ in the natural way, i.e. $y_i=a$ if $x_i=a$
and $y_i=\bar a$ otherwise.
Function $g$ will have the following form:
\begin{equation}
g(x)=\sum_{i\in V} g_i(x_i) + \sum_{\{i,j\}\in E} \lambda_{ij}d(x_i,x_j)
\label{eq:PottsRelaxation}
\end{equation}
where $d(\cdot,\cdot)$ is a {\em tree metric} with respect to a certain tree $T$:
\begin{defn}
Let $T=(\calD,\calE,d)$ be a weighted undirected tree with positive weights $d(e)$, $e\in \calE$.
The {\em tree metric} on $\calD$ is the function $d:\calD\times\calD\rightarrow\mathbb R$
defined as follows: $d(a,b)$ for $a,b\in \calD$ is the length of the unique
path from $a$ to $b$ in $T$, where $d(e)$ is treated as the length of edge $e\in \calE$.
\end{defn}
We define $T$ as the star graph rooted at $\oo$, i.e.\ $\calE=\{\{a,\oo\}\:|\:a\in\calL\}$.
All edges are assigned length 1.
The unary functions in~\eqref{eq:PottsRelaxation}
are set as follows: $g_i(\oo)=0$ and $g_i(a)=f_i(a)-f_i(\bar a)$ for $a\in\calL$.
Function $g$ in eq.~\eqref{eq:PottsRelaxation} is now completely defined.
It can be seen that minimizing $f^a$ is equivalent to minimizing $g(y)$ over $y\in\{a,\oo\}^V$.

The following observation will be crucial. 
\begin{prop}
For any $i\in V$ and $a,b\in\cal L$ with $a\ne b$ there holds $g_i(a)+g_i(b)\ge 0$.
\end{prop}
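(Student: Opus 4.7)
The plan is to unfold the definition of $g_i$ and reduce the claim to a pair of elementary inequalities coming from the fact that $f_i(\bar a)$ is defined as a minimum. Writing out the sum gives
\begin{equation*}
g_i(a)+g_i(b)=\bigl(f_i(a)-f_i(\bar a)\bigr)+\bigl(f_i(b)-f_i(\bar b)\bigr),
\end{equation*}
so it suffices to prove $f_i(a)+f_i(b)\ge f_i(\bar a)+f_i(\bar b)$.

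The key observation is the hypothesis $a\ne b$: it implies $b\in\bar a$ and $a\in\bar b$. Then by the very definition $f_i(\bar a)=\min_{c\in\bar a}f_i(c)$, we have $f_i(\bar a)\le f_i(b)$, and symmetrically $f_i(\bar b)\le f_i(a)$. Adding the two inequalities yields exactly the bound required above, and rearranging gives $g_i(a)+g_i(b)\ge 0$.

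There is essentially no obstacle here; the only place where the argument can fail is if one tries to apply the two minimum inequalities without first invoking $a\ne b$, since otherwise $b$ need not lie in $\bar a$. Once that distinction is flagged, the proof is a two-line computation, and I would present it exactly as above with no further machinery.
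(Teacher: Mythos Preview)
Your proof is correct. It is a slightly different route from the paper's, though both are elementary. The paper instead fixes the global minimizer $a_1\in\arg\min_{c\in\calL}f_i(c)$ and the second-best $a_2\in\arg\min_{c\in\bar a_1}f_i(c)$, then observes that $g_i(a_1)=f_i(a_1)-f_i(a_2)\le 0$ while for every $c\ne a_1$ one has $g_i(c)=f_i(c)-f_i(a_1)\ge f_i(a_2)-f_i(a_1)=-g_i(a_1)\ge 0$; the claim follows by case analysis on whether one of $a,b$ equals $a_1$. Your argument avoids singling out the minimizer and instead exploits the symmetry $b\in\bar a$, $a\in\bar b$ directly, which is arguably cleaner for the stated inequality. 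What the paper's version buys is a bit of extra structural information that is implicitly reused later (at most one label can have $g_i<0$, and that negative value is dominated in absolute value by every other $g_i$), but for the proposition as stated your approach is entirely sufficient.
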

\begin{proof}
Let $a_1\in\arg\min\limits_{a\in\calL} f_i(a)$ and $a_2\in\arg\min\limits_{a\in\bar a_1} f_i(a)$.
We have $g_i(a_1)=f_i(a_1)-f_i(a_2)\le 0$ and $g_i(a)=f_i(a)-f_i(a_1)\ge f_i(a_2)-f_i(a_1)\ge 0$ for any $a\in\bar a_1$.
This implies the claim.
\end{proof}
More generally, we say that function $g_i:\calD\rightarrow \mathbb R$ is {\em $T$-convex}
if for any pair of edges $\{a,b\},\{b,c\}\in \calE$ with $a\ne c$ there holds
\begin{equation}
d(a,c)g_i(b)\le d(b,c)g_i(a)+d(a,b)g_i(c)
\end{equation}
Clearly, terms $g_i$ contructed above are $T$-convex. We will prove the following result
for an arbitrary tree $T$ and function $g$ with $T$-convex unary terms $g_i$.
(Part (a) will imply that Kovtun's approach indeed reduces to the minimization of the function $g$ above;
part (b) will motivate a divide-and-conquer algorithm for minimizing $g$.)
\begin{thm}
Let $\{a,b\}$ be an edge in $\calE$. For labeling $x\in\calD^V$ define
binary labeling $x^{[ab]}\in\{a,b\}^V$ as follows: $x^{[ab]}_i$ is the label
in $\{a,b\}$ closest to $x_i$ in $T$. \\
(a) If $x\in\calD^V$ is a minimizer of $g$ then $x^{[ab]}\in\arg\min\{g(y)\:|\:y\in\{a,b\}^V\}$. \\
(b) If $y\in\arg\min\{g(y)\:|\:y\in\{a,b\}^V\}$ then function $g$ has a minimizer 
$x\in\calD^V$ such that $x^{[ab]}=y$.
\label{th:mult:binary:equivalence}
\end{thm}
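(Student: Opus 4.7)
My plan is to prove both parts via a single exchange lemma: for any $x \in \calD^V$ and any $y \in \{a,b\}^V$, I would construct $\hat x \in \calD^V$ with $\hat x^{[ab]} = y$ and set $\hat y := x^{[ab]}$ so that $g(\hat x) + g(\hat y) \le g(x) + g(y)$. Given this lemma, part (a) follows by taking $x$ a minimizer of $g$: optimality yields $g(\hat x) \ge g(x)$, hence $g(x^{[ab]}) = g(\hat y) \le g(y)$ for every $y \in \{a,b\}^V$, which shows $x^{[ab]}$ is optimal over $\{a,b\}^V$. Symmetrically, part (b) follows by taking $y$ a minimizer over $\{a,b\}^V$ and $x$ any global minimizer: optimality of $y$ gives $g(\hat y) \ge g(y)$, hence $g(\hat x) \le g(x)$, so $\hat x$ is itself a global minimizer and $\hat x^{[ab]} = y$ by construction.

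The natural construction sets $\hat x_i = x_i$ at nodes where $x^{[ab]}_i = y_i$ (agreement) and $\hat x_i = y_i$ at the remaining nodes, projecting $x_i$ to the endpoint of edge $\{a,b\}$ on the correct side. Expanding $g(\hat x) + g(\hat y) - g(x) - g(y)$ term by term, the unary contribution is zero at agreement nodes and equals $g_i(x^{[ab]}_i) - g_i(x_i)$ at disagreement nodes; here the $T$-convexity of $g_i$ is the key tool, since $g_i$ is convex along the path in $T$ from $x_i$ (inside the subtree $V_{x^{[ab]}_i}$) out to the endpoint $x^{[ab]}_i$, which lets us offset the unary change against pairwise savings. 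For the pairwise terms I would case-split each edge $\{i,j\}$ according to whether each endpoint agrees or disagrees with $y$. The agreement-agreement and disagreement-disagreement edges fall out of the metric fact that any path in $T$ between $V_a$ and $V_b$ must traverse edge $\{a,b\}$, so projection onto $\{a,b\}$ compresses distances: $d(x^{[ab]}_i, x^{[ab]}_j) \le d(x_i, x_j)$.

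The hard part will be the mixed agreement-disagreement edges, where a naive per-edge bound can be strictly positive---for instance when two adjacent nodes with $x_i = x_j$ lying deep in the same subtree get split across the edge by the swap, paying a pairwise penalty that no local unary saving absorbs. The decisive step will therefore be a global amortization: pooling the disagreement-node unary contributions with the pairwise changes via $T$-convexity along the paths from the $x_i$'s to the nearest point on edge $\{a,b\}$, or refining the construction so that $\hat x_i$ is chosen inside $V_{y_i}$ at a point other than the endpoint $y_i$ in order to preserve the metric relationship with nearby agreement nodes. I expect this assembly of local bounds into a single global inequality, rather than any purely per-edge argument, to be where the proof actually lives.
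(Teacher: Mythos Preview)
Your exchange lemma is false as stated, and no local refinement of the construction can repair it. Take $T$ to be the path on three vertices $a,b,c$ with unit edge lengths, $V=\{1\}$ with $E=\emptyset$, and $g_1(a)=0$, $g_1(b)=-5$, $g_1(c)=-10$ (this is $T$-convex, with equality). For the edge $\{a,b\}$ let $x=(c)$ and $y=(a)$. Then $x^{[ab]}=(b)$, and the only $\hat x$ with $\hat x^{[ab]}=(a)$ is $\hat x=(a)$, since the $a$-side subtree is the singleton $\{a\}$. We get $g(\hat x)+g(\hat y)=0+(-5)=-5$ while $g(x)+g(y)=-10+0=-10$, so the inequality fails. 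Note that $x=(c)$ \emph{is} the global minimizer of $g$, so this is precisely an instance your proof of part~(a) must handle. There are no pairwise terms to amortize against and no freedom in choosing $\hat x$, so neither of your proposed fixes can apply here.

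The obstruction is structural: an exchange that only moves labels across the single edge $\{a,b\}$ can throw away arbitrarily large unary savings sitting deeper in the $b$-subtree (here at label $c$), and $T$-convexity of $g_i$ along one path does not bound that loss. The paper's proof avoids this by working with \emph{all} edges of $T$ at once. It establishes a coarea formula $g(x)=\mathrm{const}+\sum_{e\in\calE}g(x^{[e]})$, so that minimizing $g$ over $\calD^V$ becomes minimizing $\sum_e g(y^e)$ over \emph{consistent} families $(y^e)_{e\in\calE}$ of binary labelings. $T$-convexity then enters not as a local amortization but as a parametric-maxflow monotonicity between adjacent edges: if $y^{bc}$ is optimal on $\{b,c\}$ and $y^{ab}$ is optimal on $\{a,b\}$, then the pushed labeling $y^{ab}\downarrow y^{bc}$ is still optimal on $\{a,b\}$. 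Starting from the given optimal $y$ on $\{a,b\}$ and growing the tree edge by edge yields a consistent family of edge-optima, hence a global minimizer $x$ with $x^{[ab]}=y$; this is part~(b), and part~(a) follows from it by a perturbation making costs unique. If you try to rescue the exchange-lemma framing, the construction of $\hat x$ must be allowed to alter $x^{[e]}$ for edges $e\ne\{a,b\}$ as well, and you will find yourself re-deriving exactly this monotonicity step.
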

Note, part (a) and a repeated application of Theorem~\ref{th:Kovtun:po} give that any minimizer $x\in\calD^V$ of $g$
is a partially optimal labeling for $f$, i.e.\  $f$ has a minimizer $x^\ast\in\calL^V$ such that $x^\ast_i=x_i$ for all $i$ with $x_i\ne\oo$.

In the next section we consider the case of an arbitrary tree $T$,
and present an efficient algorithm for minimizing function $g$ with $T$-convex unary terms.
In section~\ref{sec:details} we discuss its specialization to the star graph $T$
with unit edge length, and sketch some implementation details.
Then in section~\ref{sec:ksub} we describe a connection to {\em $k$-submodular functions}.
Section~\ref{sec:experiments} gives experimental results, and section~\ref{sec:conclusion}
presents conclusions.


\vspace{-7pt}
\section{Minimization algorithm for general $T$}\label{sec:gAlg}
\vspace{-5pt}
We build on the work of Kolen~\cite{Kolen} and Felzenszwalb et al.~\cite{Felzenszwalb:TreeMetrics}.
They considered the case when unary functions $g_i(\cdot)$ are given by
$g_i(x_i)=\lambda_i d(x_i,c_i)$ where $\lambda_i\ge 0$ and $c_i$ is a constant node in $\calD$.
\cite{Kolen} showed that such function can be minimized via $|\calD|$ maximum flow computations
on graphs with $O(|V|)$ nodes and $O(|E|)$ edges. Using a divide-and-conquer approach, 
\cite{Felzenszwalb:TreeMetrics} improved this
to $O(\log|\calD|)$ maxflow computations (plus $O(|\calD|\log|\calD|)$ time for bookkeeping).
Their algorithm can be viewed as a generalization of the algorithm in \cite{Hochbaum:ACM01,Chambolle:EMMCVPR05,Darbon:JMIV:I}
for minimizing {\em Total Variation} functionals $g(x)=\sum_i g_i(x_i)+\sum_{\{i,j\}}\lambda_{ij}|x_j-x_i|$
with convex terms $g_i$ over $x\in\{1,2,\ldots,k\}^V$
(this corresponds to the case when $T$ is a chain with unit lengths).

In this section we show that with an appropriate modification the algorithm of~\cite{Felzenszwalb:TreeMetrics} can be applied to
function~\eqref{eq:PottsRelaxation} with $T$-convex unary terms,
and present a self-contained proof of correctness.\footnote{The proof in~\cite{Felzenszwalb:TreeMetrics}
relied on results in~\cite{Kolen}, and used a different argument. In our view,
the new proof shows more clearly why the extension to $T$-convex unary terms is possible.} 

\begin{figure}[t]
\vskip 0.2in
\normalsize
\begin{center}
\begin{tabular}{c}
\includegraphics[scale=0.45]{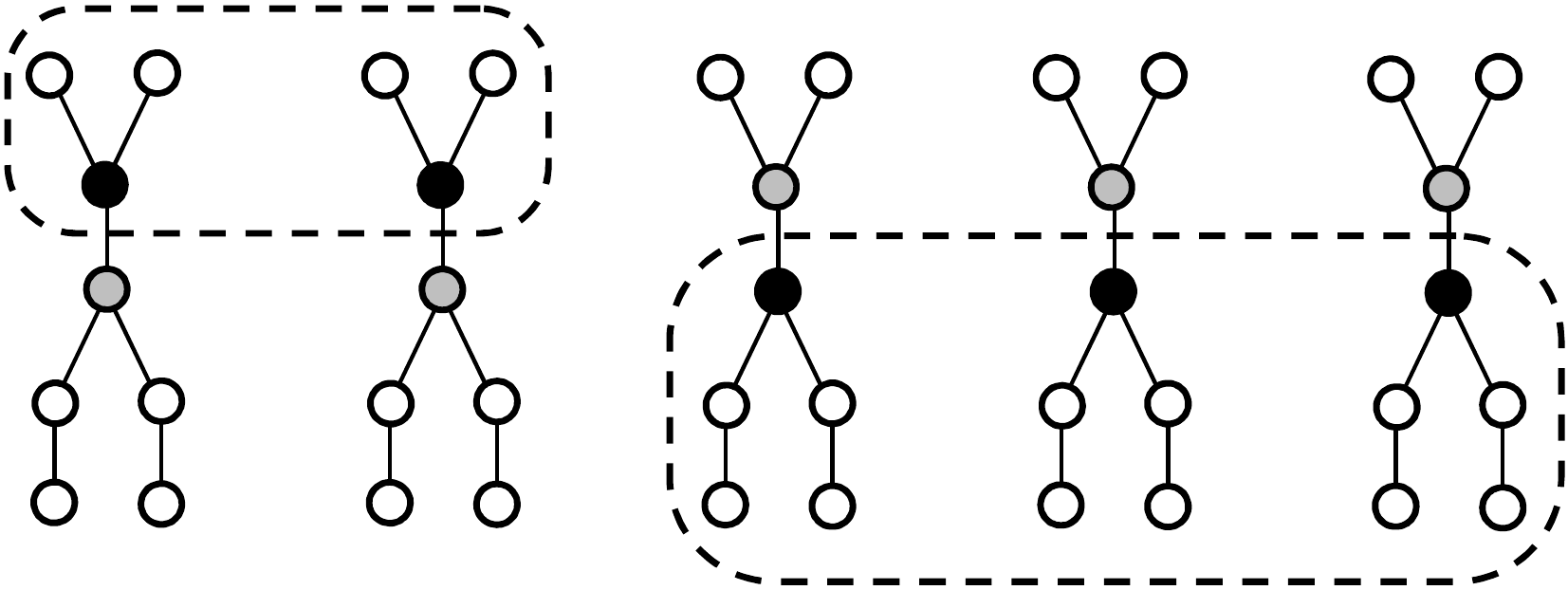}
\end{tabular}
\begin{picture}(1,0)
  \put(-5,16){$a$}
  \put(-5,1){$b$}
  \put(-190,17){$A$}
  \put(-98,1){$B$}
\end{picture}
\caption{Algorithm's illustration. 
First, it computes $y\in\arg\min\{g(y)\:|\:y\in\{a,b\}^V\}$. Suppose
that $y=(a,a,b,b,b)$. By Theorem~\ref{th:mult:binary:equivalence}(b),
$g$ has minimizer $x$ that belongs to regions $A$ and $B$.
To find solution $(x_1,x_2)$ for region $A$, the algorithm is called
recursively while fixing variables $x_3,x_4,x_5$ to $a$
(this is equivalent to fixing these variables to their
optimal labels in $B$ - the function changes only by a constant that does not depend on $x_1,x_2$).
Solution $(x_3,x_4,x_5)$ is computed similarly.
}
\label{fig:alg}
\end{center}
\vskip -0.2in
\end{figure}

The main step of the algorithm is computing a minimizer $y\in\arg\min\{g(y)\:|\:y\in\{a,b\}^V\}$ for some edge $\{a,b\}\in \calE$
(this can be done via a maxflow algorithm). By Theorem~\ref{th:mult:binary:equivalence}(b), $y$ gives some information about a minimizer of $g$.
This information allows to split the problem into two independent subproblems
which can then be solved recursively. We arrive at the following algorithm.

\begin{algorithm}[!h]
\caption{~~{\tt SPLIT}($g$)                $$ \vspace{-32pt} $$
{\bf Input:} function $g:\calD^V\rightarrow\mathbb R$ specified by graph $(V,E)$,
tree $T=(\calD,\calE,d)$, unary terms $g_i:\calD\rightarrow\mathbb R$ and edge weights
$\lambda_{ij}$        $$ \vspace{-18pt} $$
{\bf Output:} labeling $x\in\arg\min \{g(x)\:|\:x\in\calD^V\}$
}\label{alg}
\begin{algorithmic}[1]
\STATE if $\calD=\{a\}$ return $(a,\ldots,a)$
\STATE pick edge $\{a,b\}\in \calE$ 
\STATE compute $y\in\arg\min\{g(y)\:|\:y\in\{a,b\}^V\}$ 
\STATE let $T_a=(\calD_a,\calE_a,d_a)$, $T_b=(\calD_b,\calE_b,d_b)$
be the trees obtained from $T$ by removing edge $\{a,b\}$ (with $a\in\calD_a$, $b\in\calD_b$)
\FOR{$c\in\{a,b\}$}
\STATE let $V_c=\{i\in V\:|\:y_i=c\}$
\STATE let $g^c$ be the function $\calD_{c}^{V_c}\rightarrow\mathbb R$ obtained from $g$
by fixing all nodes in $V-V_{c}$ to $c$, i.e.\ 
 $g^c(x)=g(\bar x)$ where $\bar x_i=x_i$ for $i\in V_c$
and $\bar x_i=c$ for $i\in V-V_{c}$\!\!\!\!\!\!\!
\STATE let $x^c:={\tt SPLIT}(g^c)$
\ENDFOR
\STATE merge labelings $x^a$, $x^b$ into labeling $x$, return $x$
\end{algorithmic}
\end{algorithm}
Note that function $g^c$ in line 7 is defined on the subgraph
of $(V,E)$ induced by $V_c$. Indeed, for each edge $\{i,j\}\in E$ with $i\in V_c$, $j\in V-V_c$
pairwise term $\lambda_{ij}d(x_i,x_j)$ is transformed to a unary term $\lambda_{ij}d(x_i,c)$.
It can be checked that this unary term is $T_c$-convex.

The following theorem implies that the algorithm is correct; its proof is given in section~\ref{sec:proofs}.
\begin{thm}
If $x^c$ in line 9 is a minimizer of $g^c$ over $\calD_c^{V_c}$ for each $c\in\{a,b\}$
then labeling $x$ in line 10 is a minimizer of $g$ over $\calD^V$.
\label{th:PottsAlg}
\end{thm}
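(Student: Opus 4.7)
The plan is to combine Theorem~\ref{th:mult:binary:equivalence}(b) with a clean decomposition of $g$ across the edge $\{a,b\}$. Let $y$ be the binary minimizer computed in line 3. By part (b), there exists $x^{\ast}\in\arg\min g$ with $(x^{\ast})^{[ab]}=y$. Since removing $\{a,b\}$ from $T$ splits $\calD$ into the subtrees $\calD_a$ and $\calD_b$, the label in $\{a,b\}$ closest to any $d\in\calD_c$ is exactly $c$; therefore $(x^{\ast})^{[ab]}=y$ is equivalent to saying $x^{\ast}_i\in\calD_c$ whenever $y_i=c$. In other words, $x^{\ast}|_{V_a}\in\calD_a^{V_a}$ and $x^{\ast}|_{V_b}\in\calD_b^{V_b}$.

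The key step I would establish is the identity
$$g(x)=g^{a}(x|_{V_a})+g^{b}(x|_{V_b})+C$$
for every $x\in\calD^V$ satisfying $x|_{V_c}\in\calD_c^{V_c}$, where $C=d(a,b)\sum_{\{i,j\}\in E,\ i\in V_a,\ j\in V_b}\lambda_{ij}$ is a constant depending only on $y$. To prove it I would split the sum defining $g$ into unary terms, pairwise terms internal to $V_a$, pairwise terms internal to $V_b$, and crossing pairwise terms. The first three groups are by definition reproduced in $g^a(x|_{V_a})+g^b(x|_{V_b})$. For a crossing edge $\{i,j\}$ with $x_i\in\calD_a$ and $x_j\in\calD_b$, the unique path in $T$ from $x_i$ to $x_j$ must pass through $a$ and then through $b$, so
$$d(x_i,x_j)=d(x_i,a)+d(a,b)+d(b,x_j).$$
The first summand is exactly the unary contribution that line~7 adds to $g^a$ when it fixes $x_j$ to $a$, the third is what it adds to $g^b$, and the middle summand contributes $\lambda_{ij}d(a,b)$ to the constant $C$. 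Summing over all crossing edges yields the identity.

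Once the decomposition identity is in hand, the theorem follows quickly. Applying it to $x^{\ast}$ (which lies on the right sides by the first paragraph) and to the merged labeling $x$ produced in line~10 (which lies on the right sides by construction), and using the optimality of $x^a,x^b$ for $g^a,g^b$, yields
$$g(x)=g^{a}(x^a)+g^{b}(x^b)+C\ \le\ g^{a}(x^{\ast}|_{V_a})+g^{b}(x^{\ast}|_{V_b})+C=g(x^{\ast}),$$
so $x$ is a minimizer of $g$ over $\calD^V$.

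The main obstacle is purely a bookkeeping issue: verifying rigorously that the unary terms line~7 adjoins to each $g^c$ are precisely the $\lambda_{ij}d(x_i,c)$ portions of the decomposed path (and that the $T_c$-convexity needed for the recursion to be well-posed is preserved, which was already noted right after the algorithm). Everything else is a routine identification of terms once the path $x_i\to a\to b\to x_j$ observation is made.
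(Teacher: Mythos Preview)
Your proof is correct and follows essentially the same route as the paper: invoke Theorem~\ref{th:mult:binary:equivalence}(b) to get a minimizer $x^\ast$ respecting the split $y$, then use the tree-metric path decomposition $d(x_i,x_j)=d(x_i,a)+d(a,b)+d(b,x_j)$ across the cut to see that fixing the ``other side'' to a constant label only shifts $g$ by a constant; the paper packages this as a two-step swap $(\hat x^a,\hat x^b)\to(x^a,\hat x^b)\to(x^a,x^b)$ rather than your single additive identity, but the underlying observation is identical. One harmless bookkeeping slip: your stated constant $C$ omits the extra unary pieces $\sum_{i\in V_b}g_i(a)+\sum_{i\in V_a}g_i(b)$ that $g^a+g^b$ picks up from the fixed nodes, but since $C$ only needs to be independent of $x$ this does not affect the argument.
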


The algorithm leaves some freedom in line 2, namely the choice of edge $\{a,b\}\in\calE$.
Ideally, we would like to choose an edge that splits the tree into approximately equals parts
($|\calD_a|\approx|\calD_b|$). Unfortunately, this is not always possible; if, for example,
$T$ is a star graph then every split will be very unbalanced.
To deal with this issue,~\cite{Felzenszwalb:TreeMetrics} proposed to expand tree $T$ (and modify the input function accordingly)
so that the new tree $T'$ admits a more balanced split. Details are given below.

Let $a$ be a node in $\calD$ with two or more neighbors.
Let us split these neighbors
into non-empty disjoint sets $\calN$, $\calN'$
and modify tree $T$ as described in Fig.~\ref{fig:insertion}.
(This step is inserted before line 2; the new edge $\{a,b\}$ becomes the output of line 2.)
We denote $\calD'=\calD\cup\{b\}$; also, let $\calD'_{a}$, $\calD'_{b}$
be the connected components of $T'$ after removing
edge $\{a,b\}$ (with $a\in\calD'_{a}$, $b\in\calD'_{b}$).

The length of new edge $\{a,b\}$ is set to an  infinitesimally small constant $\epsilon>0$.
The new unary function $g^\epsilon_i:\calD'\rightarrow\mathbb R$ for node $i\in V$ is defined via 
\begin{equation}
g^\epsilon_i(c)=g_i(c)+\epsilon\cdot u_i\cdot[c\in\calD'_b] \qquad \forall c\in\calD'
\label{eq:gepsilon}
\end{equation}
where we assume that $g_i(b)=g_i(a)$,
and $u_i\in\mathbb R$ is chosen in such a way that function $g^\epsilon_i$ is $T'$-convex.
(Such $u_i$ always exists - see below). The new functional is thus $g^\epsilon(x)=\sum_{i\in V}g^\epsilon_i(x_i)
+\sum_{\{i,j\}\in E} \lambda_{ij}d^\epsilon(x_i,x_j)$
for  $x\in(\calD')^V$, where $d^\epsilon$ is the new tree metric.

There holds $|g^\epsilon(x)-g(x^{b\mapsto a})|\le const\cdot\epsilon$ for any $x\in (\calD')^V$,
where $x^{b\mapsto a}$ is the labeling obtained from $x$ by assigning label $a$ to nodes with label $b$.
Therefore, if $\epsilon$ is small enough then the following holds: if $x\in(\calD')^V$ is an optimal
solution of the modified problem then $x^{b\mapsto a}$ is an optimal solution of the original problem.

\begin{figure}[t]
\vskip 0.2in
\normalsize
\begin{center}
\begin{tabular}{c}
\includegraphics[scale=0.22]{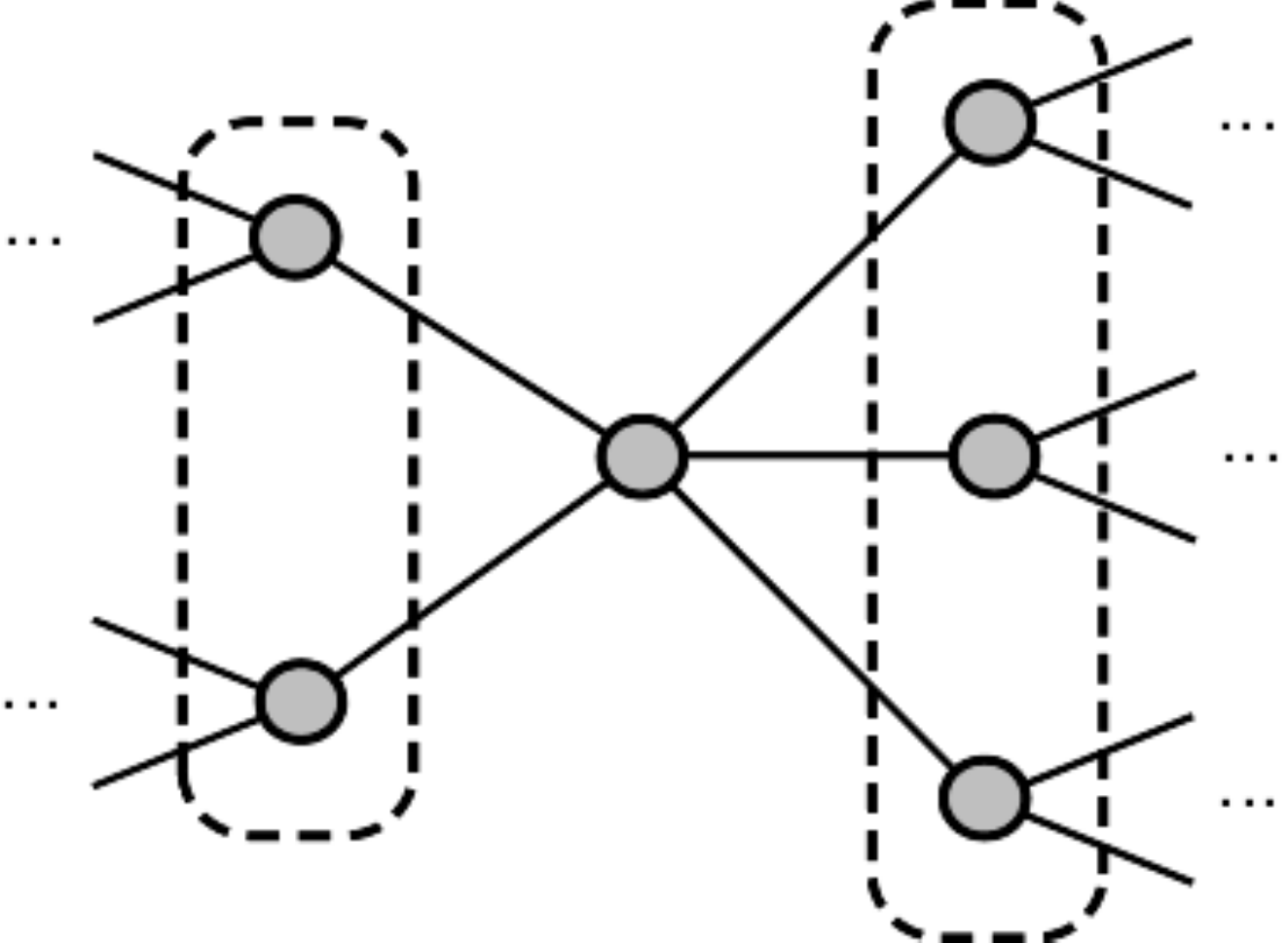}
~~~\raisebox{25pt}{\huge $\Rightarrow$}~~~
\includegraphics[scale=0.22]{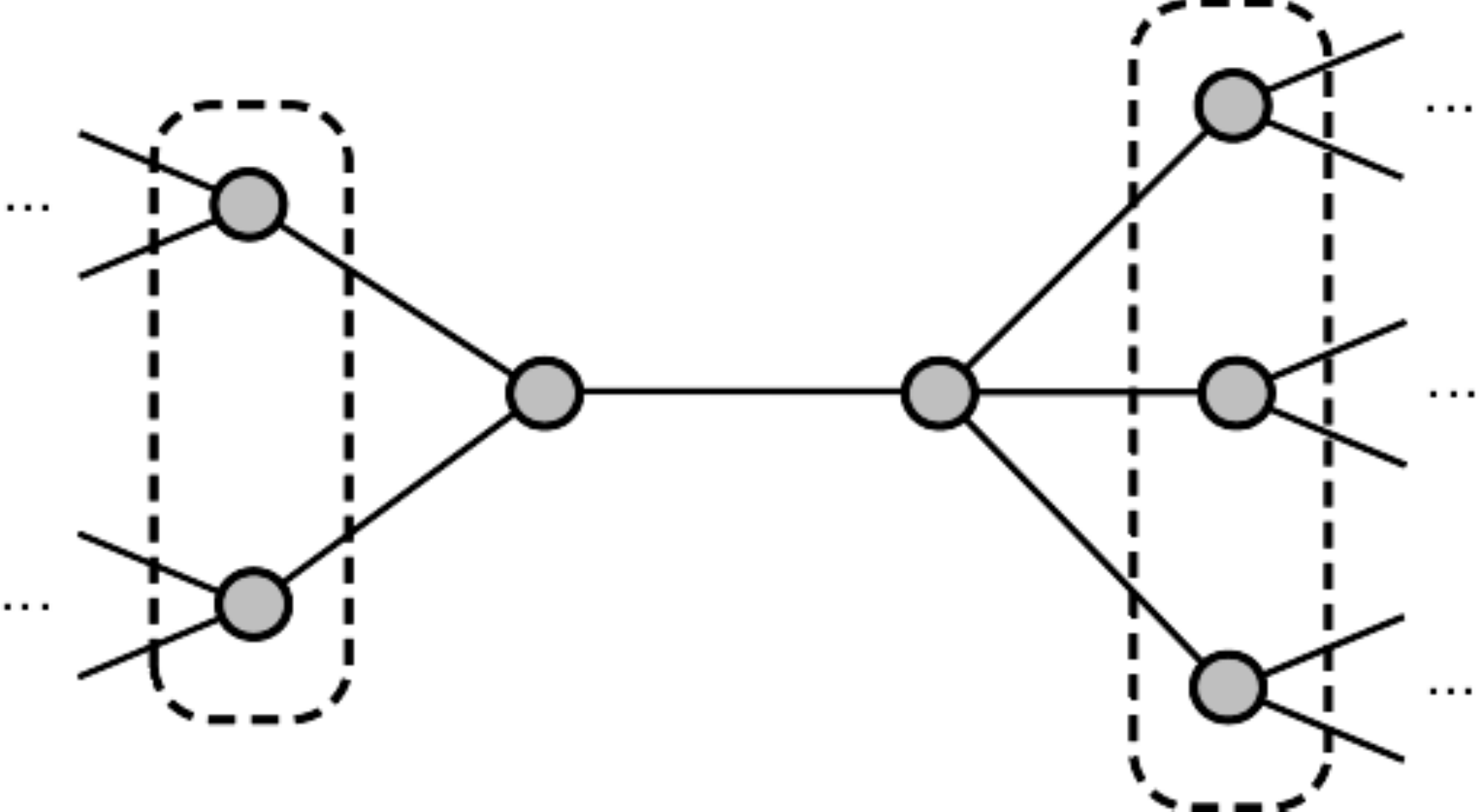}
 \vspace{-11pt} \\
\begin{picture}(1,0)
  \put(-80,40){$a$}
  \put(43,40){$a$}
  \put(74,40){$b$}
  \put(-103,60){$\calN$}
  \put(-59,67){$\calN'$}
  \put(20,60){$\calN$}
  \put(95,67){$\calN'$}
\end{picture}
\end{tabular}
\caption{Inserting edge into $T$. Given node $a\in\calD$ and the partition of its neighbors $\calN\cup\calN'$,
tree $T$ is modified as follows: (i) add new node $b\notin\calD$; (ii) add new edge $\{a,b\}$;
(iii) keep nodes $c\in\calN$ as neighbors of $a$, but make nodes $c'\in\calN'$ neighbors of $b$.
}
\label{fig:insertion}
\end{center}
\vskip -0.2in
\end{figure}

The cost function used in line 3 can be written as
$g^\epsilon(y)=const+\epsilon\cdot g'(y)$
for all $y\in\{a,b\}^V$, where function $g':\{a,b\}^V\rightarrow\mathbb R$ is defined via
\begin{equation}
g'(y)=\sum_{i\in V}u_i\cdot[y_i=b]+\sum_{\{i,j\}\in E}\lambda_{ij}\cdot[y_i\ne y_j]
\end{equation}
Therefore, minimizing $g^\epsilon$ over $\{a,b\}^V$ is equivalent to minimizing $g'$
(and thus the minimizer does not depend on $\epsilon$).

To summarize, we showed that the {\tt SPLIT} algorithm remains correct
if we replace line 2 with the tree modification step described above,
and in line 3 compute $y\in\arg\min\{g'(y)\:|\:y\in\{a,b\}^V\}$.
Also, in line 10 we need to convert labeling $x^b$ to $(x^b)^{b\mapsto a}$
before merging with $x^a$.


\myparagraph{Selecting $u_i$}
It remains to show that value $u_i$ for node $i\!\in\! V$ can be set in such a way that function~\eqref{eq:gepsilon}
is $T'$-convex.
\begin{prop}
Define 
$$
u_i^{\min}\!=\!-\min_{c\in\calN}\frac{g_i(c)\!-\!g_i(a)}{d(a,c)} \qquad
u_i^{\max}\!=\!\min_{c'\in\calN'}\frac{g_i(c')\!-\!g_i(a)}{d(a,c')}
$$
There holds $u_i^{\min}\le u_i^{\max}$, and for any $u_i\in[u_i^{\min},u_i^{\max}]$
and $\epsilon>0$ function $g^\epsilon_i$ in~\eqref{eq:gepsilon} is $T'$-convex.
\label{prop:uminmax}
\end{prop}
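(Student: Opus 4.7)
The plan is to first show $u_i^{\min}\le u_i^{\max}$ directly from $T$-convexity of $g_i$ at node $a$, and then verify $T'$-convexity of $g_i^\epsilon$ by case-analyzing triples of consecutive edges in $T'$. The key conceptual point is that splitting $a$ into $a$ and $b$ introduces exactly two new $T'$-convexity constraints that were absent in $T$ --- one per side of the new edge $\{a,b\}$ --- and these are precisely the bounds defining $[u_i^{\min},u_i^{\max}]$.

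For the first step, fix $c\in\calN$ and $c'\in\calN'$. Since $\{c,a\},\{a,c'\}$ is a pair of consecutive edges in $T$, $T$-convexity of $g_i$ gives
$$d(c,c')g_i(a)\le d(a,c')g_i(c)+d(a,c)g_i(c').$$
Using $d(c,c')=d(a,c)+d(a,c')$ and regrouping yields $(g_i(a)-g_i(c))/d(a,c)\le (g_i(c')-g_i(a))/d(a,c')$. Maximizing the left side over $c\in\calN$ and minimizing the right side over $c'\in\calN'$ gives $u_i^{\min}\le u_i^{\max}$.

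For the second step, fix $u_i\in[u_i^{\min},u_i^{\max}]$ and any triple $p,q,r$ with $\{p,q\},\{q,r\}\in\calE'$ and $p\ne r$. Since $d'(p,r)=d'(p,q)+d'(q,r)$, the offset $\epsilon u_i[\cdot\in\calD'_b]$ in $g_i^\epsilon$ enters the $T'$-convexity inequality additively and cancels whenever $p,q,r$ lie on a common side of $\{a,b\}$. Thus any triple that does not straddle $\{a,b\}$ reduces to $T$-convexity of $g_i$ on the corresponding triple in $T$ (collapsing $b$ back to $a$ and using $g_i(b)=g_i(a)$ where necessary; the subcase $q=b$ with $p,r\in\calN'$ uses this identification at $a$). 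The only triples that do straddle $\{a,b\}$ contain both $a$ and $b$, and up to the $p\leftrightarrow r$ symmetry there are two configurations: (A) $q=b$ with $\{p,r\}=\{a,c'\}$ for some $c'\in\calN'$, and (B) $q=a$ with $\{p,r\}=\{b,c\}$ for some $c\in\calN$. Substituting $d'(a,b)=\epsilon$ and keeping $d'(b,c')=d(a,c')$, and cancelling matched terms, case (A) reduces to $u_i\le (g_i(c')-g_i(a))/d(a,c')$ (i.e.\ $u_i\le u_i^{\max}$) while (B) reduces to $u_i\ge (g_i(a)-g_i(c))/d(a,c)$ (i.e.\ $u_i\ge u_i^{\min}$); both hold by hypothesis.

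The main obstacle is not conceptually deep but organizational: one has to enumerate the straddling triples carefully and verify that the $\epsilon$-dependence in each resulting inequality cancels exactly, so that the prescribed interval $[u_i^{\min},u_i^{\max}]$ works uniformly for all $\epsilon>0$ rather than only asymptotically as $\epsilon\to 0$.
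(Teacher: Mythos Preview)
Your proof is correct and follows essentially the same approach as the paper: you derive $u_i^{\min}\le u_i^{\max}$ from $T$-convexity of $g_i$ at $a$, and then reduce the two new $T'$-convexity constraints (the triples straddling the inserted edge $\{a,b\}$) to the bounds $u_i\ge u_i^{\min}$ and $u_i\le u_i^{\max}$. In fact your write-up is slightly more complete than the paper's, which only treats the two straddling triples explicitly and leaves the non-straddling cases implicit; your observation that the offset cancels on same-side triples because $d'(p,r)=d'(p,q)+d'(q,r)$, together with the collapse $b\mapsto a$, cleanly dispatches those.
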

\begin{proof}
Inequality $u_i^{\min}\le u_i^{\max}$ follows from $T$-convexity of $g_i$ for pairs of edges $\{c,a\},\{a,c'\}$
($c\in\calN$, $c'\in\calN'$).

Let us show $T'$-convexity of $g^\epsilon_i$ for
edges $\{c,a\},\{a,b\}$ where $c\in\calN$. 
We need to prove that
%
$$
[d(c,a)+\epsilon]\cdot g_i(a)  \le  \epsilon \cdot g_i(c)+d(c,a)\cdot(g_i(a)+\epsilon\cdot u_i)
$$
This is equivalent to $g_i(a)\le g_i(c) + d(a,c)\cdot u_i$. This holds since
$u_i\ge u_i^{\min}\ge (g_i(a)-g_i(c))/d(a,c)$.
The proof of $T'$-convexity of $g^\epsilon_i$ for edges $\{a,b\},\{b,c'\}$ with $c'\in\calN'$
is analogous. 
\end{proof}


\subsection{Proof of theorems~\ref{th:mult:binary:equivalence} and~\ref{th:PottsAlg}}\label{sec:proofs}

The proof is based on the theorem below.
Versions of this theorem in the case when $T$ is a chain with unit weights appeared in~\cite{Hochbaum:ACM01,Zalesky:JAM02,Chambolle:EMMCVPR05,Darbon:JMIV:I};
eq.~\eqref{eq:coarea} was then called the {\em coarea formula}~\cite{Chambolle:EMMCVPR05,Darbon:JMIV:I}.

In part (b) we exploit the fact that unary functions $g_i$ are $T$-convex, and make use of a well-known result about the {\em parametric maxflow} problem~\cite{Gallo:SICOMP89}.
%
\begin{thm} {\bf (a)} {\em [Coarea formula]} There holds
\begin{equation}
g(x)=const+\sum_{\{a,b\}\in \calE} g(x^{[ab]})\qquad\forall x\in\calD^V
\label{eq:coarea}
\end{equation}
where $x^{[ab]}\in\{a,b\}^V$ is defined as in Theorem~\ref{th:mult:binary:equivalence}. \\
{\bf (b)} Consider edges $\{a,b\},\{b,c\}\in\calE$ with $a\ne c$.
Let $y^{bc}$ be a minimizer of $\{g(y)\:|\:y\in\{b,c\}^V\}$.
If $y^{ab}$ is a minimizer of $\{g(y)\:|\:y\in\{a,b\}^V\}$ then so is labeling
$y^{ab}\downarrow y^{bc}\in\{a,b\}^V$ where binary operation $\downarrow$ is defined component-wise via
%
%
$$
\ell\downarrow \ell'=\begin{cases}
\ell & \mbox{if }\ell'=b \\
b & \mbox{if }\ell'=c
\end{cases}
\qquad\forall \ell\in\{a,b\},\ell'\in\{b,c\}
$$
%
%
\label{th:coarea:parametric}
\end{thm}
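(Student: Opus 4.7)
The plan is to handle the two parts in sequence, since each needs a different ingredient.

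For part (a), I would split $g(x^{[ab]})$ into its unary and pairwise contributions and sum over edges $\{a,b\}\in\calE$. For a fixed pair $\{i,j\}\in E$, the pairwise contribution to $\sum_{\{a,b\}} g(x^{[ab]})$ is $\lambda_{ij} d(a,b)\,[x^{[ab]}_i\neq x^{[ab]}_j]$, and $x^{[ab]}_i\neq x^{[ab]}_j$ precisely when the edge $\{a,b\}$ lies on the unique $T$-path from $x_i$ to $x_j$; summing $d(a,b)$ over all such edges recovers the path length $\lambda_{ij}d(x_i,x_j)$, matching the original pairwise term. For the unary piece, I would root $T$ arbitrarily at some $r\in\calD$, write each tree edge as $\{\mathrm{parent}(b),b\}$, and decompose $g_i(x^{[ab]}_i) = g_i(\mathrm{parent}(b)) + (g_i(b)-g_i(\mathrm{parent}(b)))\,[x_i\in T_b]$, where $T_b$ is the subtree below $b$. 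Summing the second term over $b\neq r$ telescopes along the path from $r$ to $x_i$ to $g_i(x_i)-g_i(r)$, while the first term contributes a constant in $x_i$. Combining unary and pairwise pieces gives $\sum_{\{a,b\}\in\calE} g(x^{[ab]}) = g(x) + \mathrm{const}$, which is the coarea formula.

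For part (b) the strategy is a symmetric exchange argument. Let $S = \{i\in V : y^{ab}_i=a,\ y^{bc}_i=c\}$; by the definition of $\downarrow$, $y := y^{ab}\downarrow y^{bc}$ coincides with $y^{ab}$ off $S$ and flips $a\to b$ on $S$. Define a companion $z\in\{b,c\}^V$ that agrees with $y^{bc}$ off $S$ and flips $c\to b$ on $S$. The key claim is the weighted inequality
\[
d(b,c)\, g(y) + d(a,b)\, g(z) \;\le\; d(b,c)\, g(y^{ab}) + d(a,b)\, g(y^{bc}).
\]
Its unary contribution vanishes off $S$ and on $S$ equals $d(a,c)\,g_i(b) - d(b,c)\,g_i(a) - d(a,b)\,g_i(c)$ after using $d(a,c)=d(a,b)+d(b,c)$, which is $\le 0$ by $T$-convexity of $g_i$ applied to the edge pair $\{a,b\},\{b,c\}$. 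The pairwise contribution, using that $d$ restricted to $\{a,b\}$ (resp.\ $\{b,c\}$) equals $d(a,b)[\cdot\neq\cdot]$ (resp.\ $d(b,c)[\cdot\neq\cdot]$), reduces after dividing by $d(a,b)d(b,c)$ to
\[
[y_i\neq y_j] + [z_i\neq z_j] \;\le\; [y^{ab}_i\neq y^{ab}_j] + [y^{bc}_i\neq y^{bc}_j]
\]
for each $\{i,j\}\in E$; writing $p_i=[y^{ab}_i=a]$ and $q_i=[y^{bc}_i=c]$, one has $[y_i=a]=p_i(1-q_i)$ and $[z_i=c]=(1-p_i)q_i$, so the inequality follows by a short enumeration of the sixteen configurations of $(p_i,q_i,p_j,q_j)$.

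Rearranging the weighted inequality yields $d(b,c)(g(y)-g(y^{ab})) \le d(a,b)(g(y^{bc})-g(z))$; since $y^{bc}$ is optimal on $\{b,c\}^V$ and $z\in\{b,c\}^V$, the right side is $\le 0$, so $g(y)\le g(y^{ab})$. Combined with the optimality of $y^{ab}$ on $\{a,b\}^V$, this forces $g(y)=g(y^{ab})$, so $y$ is a minimizer as required. The step I expect to need the most care is the pairwise indicator inequality, which is not an immediate submodular identity; everything else follows mechanically from $T$-convexity, the telescoping argument, and the definitions of $y$ and $z$.
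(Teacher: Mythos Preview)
Your proposal is correct and matches the paper's argument. For part~(a) the paper records the unary constant via the symmetric formula $\sum_{a\in\calD}(1-\deg(a))g_i(a)$ rather than your rooted telescoping, but these are equivalent bookkeeping for the same identity; for part~(b), after normalizing by $d(a,b)$ and $d(b,c)$ the paper identifies the problem as a parametric maxflow instance and cites Gallo et~al., whereas you re-derive the same inequality directly---in the paper's $0/1$ variables your companion $z$ is exactly $y'\wedge y''$, your $y$ is $y'\vee y''$, and your ``not an immediate submodular identity'' for the pairwise term is precisely the submodularity of $|y_i-y_j|$ on $\{0,1\}^2$, so no case enumeration is needed.
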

\begin{proof}
{\bf Part (a)~}
It is straightforward to check that the following holds for nodes $i\in V$ 
and edges $\{i,j\}\in E$ respectively:
\begin{eqnarray*}
g_i(x_i)&=&\left[\sum_{a\in\calD}(1-deg(a))g_i(a)\right]+\sum_{\{a,b\}\in \calE} g(x^{[ab]}_i) \\
\lambda_{ij}d(x_i,x_j)&=&\lambda_{ij}\sum_{\{a,b\}\in \calE} d(x^{[ab]}_i, x^{[ab]}_j)
\end{eqnarray*}
where $deg(a)$ is the number of neighbors of $a$ in $T$. Summing these equations gives~\eqref{eq:coarea}.

\noindent {\bf Part (b)~}
Let $g':\{0,1\}^V\rightarrow \mathbb R$ be the function
obtained from $g$ by associating $0\mapsto a$, $1\mapsto b$.
Similarly, let $g'':\{0,1\}^V\rightarrow \mathbb R$ be the function
obtained from $g$ by associating $0\mapsto b$, $1\mapsto c$.
We can write
\begin{eqnarray*}
h'(y)\triangleq\frac{g'(y)}{d(a,b)}= const + \sum_{i\in V}u'_iy_i+\sum_{\{i,j\}\in E}\lambda_{ij}|y_j-y_i| \\
h''(y)\triangleq\frac{g''(y)}{d(b,c)}= const + \sum_{i\in V}u''_iy_i+\sum_{\{i,j\}\in E}\lambda_{ij}|y_j-y_i|
\end{eqnarray*}
where
\begin{equation*}
u'_i=\frac{g_i(b)-g_i(a)}{d(a,b)} \qquad u''_i=\frac{g_i(c)-g_i(b)}{d(b,c)}
\end{equation*}
for $i\in V$. The $T$-convexity of $g_i$ implies that $u'_i\le u''_i$.
We need to show the following:
if $y',y''\in\{0,1\}^V$ are minimizers of $h'$ and $h''$ respectively then labeling
$y'\vee y''$ is a minimizer of $h'$. This is a well-known fact
about the parametric maxflow problem~(\cite{Gallo:SICOMP89}, Lemma 2.8).
Indeed,
\begin{eqnarray*}
&&\hspace{-10pt} h'(y'\vee y'')-h'(y') \le h'(y'')-h'(y'\wedge y'')  \\ 
&&\hspace{-5pt} =
h''(y'')-h''(y'\wedge y'')+\sum_{i:(y'_i,y''_i)=(0,1)}[u'_i-u''_i]\le 0 
\end{eqnarray*}

\end{proof}

We say that a family of binary labelings $\by=(y^{ab}\in\{a,b\}^V\:|\:\{a,b\}\in\calE\})$
is {\em consistent} if there exists labeling $x\in\calD^V$ such that $x^{[ab]}=y^{ab}$ for all $\{a,b\}\in\calE$.
Theorem~\ref{th:coarea:parametric}(a) implies that the minimization of $g(x)$ over $x\in\calD^V$ is
equivalent to the minimization of 
\begin{equation}
G(\by)=\sum_{\{a,b\}\in\calE}g(y^{ab})
\label{eq:G}
\end{equation}
over consistent labelings $\by=(y^{ab}\in\{a,b\}^V\:|\:\{a,b\}\in\calE)$. 
Next, we analyze the consistency constraint.
\begin{prop}
Family $\by$ is consistent iff for every 
for any pair of edges $\{a,b\},\{b,c\}\in\calE$
with $a\ne c$ and any node $i\in V$ there holds $(y^{ab}_i,y^{bc}_i)\ne (a,c)$.
\label{prop:consistency}
\end{prop}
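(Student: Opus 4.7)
The plan is to establish the two directions separately. Necessity is immediate from the uniqueness of paths in a tree; sufficiency is the substantive part, and I would reduce it to a standard tree argument about orientations with bounded out-degree.

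For necessity, suppose $\by$ is realized by some $x\in\calD^V$, and fix edges $\{a,b\},\{b,c\}\in\calE$ with $a\ne c$ together with a node $i\in V$. The condition $y^{ab}_i=x^{[ab]}_i=a$ forces $x_i$ to lie in the component of $T-\{a,b\}$ containing $a$, so the unique path from $x_i$ to $b$ in $T$ has last edge $\{a,b\}$. Symmetrically, $y^{bc}_i=c$ forces the last edge of that same path to be $\{b,c\}$. Since $a\ne c$, this is a contradiction, so $(y^{ab}_i,y^{bc}_i)=(a,c)$ is ruled out.

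For sufficiency I would fix $i\in V$ and orient each edge $\{a,b\}\in\calE$ as $a\to b$ iff $y^{ab}_i=b$. The hypothesis translates directly into the statement that every node of $T$ has out-degree at most one: simultaneous edges $b\to a$ and $b\to c$ correspond precisely to $y^{ab}_i=a$ and $y^{bc}_i=c$, which is forbidden. The key lemma I then need is that any orientation of a finite tree with this property admits a unique sink (a node of out-degree zero). Existence follows by starting at an arbitrary node and repeatedly following the unique outgoing edge; since $T$ is acyclic the walk traces a simple path and must terminate. Uniqueness follows by considering the unique $T$-path $v_0,v_1,\ldots,v_k$ between two putative sinks: the edge $\{v_0,v_1\}$ must be oriented $v_1\to v_0$, which uses up the unique outgoing edge at $v_1$, so $\{v_1,v_2\}$ must be oriented $v_2\to v_1$, and inductively every edge of the path points toward $v_0$, contradicting that $v_k$ is also a sink.

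Finally I would set $x_i$ to be this unique sink and verify $x_i^{[ab]}=y^{ab}_i$ for every edge $\{a,b\}$: since every node reaches $x_i$ by following the (at most one) outgoing edge at each step, $x_i$ lies on the $b$-side of $\{a,b\}$ iff the first step of that path from $a$ crosses $\{a,b\}$ iff $a\to b$ iff $y^{ab}_i=b$. The only delicate step is the sink uniqueness argument, but I expect it to go through cleanly via the induction along the path sketched above, so the main conceptual content of the proof is the reformulation in terms of orientations with out-degree at most one.
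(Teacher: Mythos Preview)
Your argument is correct and follows essentially the same route as the paper: both reformulate the per-node condition as an orientation of $T$ in which every vertex has out-degree at most one, and identify consistency with the existence of a unique sink. The paper leaves the equivalence ``exactly one sink $\Leftrightarrow$ out-degree $\le 1$'' and the verification $x_i^{[ab]}=y^{ab}_i$ implicit (``It can be seen that\ldots''), whereas you spell these out; your version is a faithful expansion of the paper's sketch rather than a different approach.
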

\begin{proof}
Let us fix a node $i\in V$, 
and denote $\by_i=(y^{ab}_i\:|\:\{a,b\}\in\calE\})$. 
Clearly, there is one-to-one
corrrespondence between possible labelings $\by_i$ and orientations of tree $T$.
Namely, to each $\by_i$ we associate a directed graph $\calG[\by_i]\!=\!(\calD,\vec\calE[\by_i])$
with $\vec\calE[\by_i]\!=\!\{(a,b)\:|\:\{a,b\}\!\in\!\calE, y^{ab}_i\!=\!b\}$.\!\!\!\!\!

It  can be seen that  $\by_i$ is consistent
(i.e. there exists $x_i\in\calD$
with $y^{ab}_i=x^{[ab]}_i$ for all $\{a,b\}\in\calE$)
iff graph $G[\by_i]$ has exactly one sink, i.e.\ a node without outgoing edges.
This is equivalent to the condition that each node $a\in\calD$
has at most one outgoing edge in $G[\by_i]$.
This is exactly what the condition in the proposition encodes.
\end{proof}

We can now prove Theorems~\ref{th:mult:binary:equivalence} and~\ref{th:PottsAlg}.

\myparagraph{Theorem~\ref{th:mult:binary:equivalence}(b)}
Consider the following algorithm for constructing a family of binary labelings $\by$.
Initially, we set $\by=(y^{ab})$ where $y^{ab}=y$ is the labeling chosen in Theorem~\ref{th:mult:binary:equivalence}(b).
We also initialize subtree $T'=(\calD',\calE')$ of $T$ via $\calD'=\{a,b\}$, $\calE'=\{\{a,b\}\}$,
and then repeat the following while $T'\ne T$:
(i) pick edge $\{a',b'\}\in\calE-\calE'$ with $a'\in\calD-\calD'$, $b'\in\calD'$,
add $a'$ to $\calD'$ and $\{a',b'\}$ to $\calE'$;
(ii) pick $y^{a'b'}\in\arg\min\{g(y)\:|\:y\in\{a',b'\}^V\}$;
(iii) go through edges $\{b',c'\}\in\calE'$ with $c'\ne a'$ (in some order)
and replace $y^{a'b'}$ with $y^{a'b'}\downarrow y^{b'c'}$.

By Theorem~\ref{th:coarea:parametric}(b), the constructed family of binary labelings $\by$ satisfies
the following: $y^{a'b'}\in \arg\min\{g(y)\:|\:y\in\{a',b'\}^V\}$ for all $\{a',b'\}\in\calE$.
Using Proposition~\ref{prop:consistency}, it is also easy to check that family $\by$ is consistent;
let $x\in\calD^V$ be the corresponding labeling.
Theorem~\ref{th:coarea:parametric}(a) implies that $x$ is a minimizer of $g$.

\myparagraph{Theorem~\ref{th:mult:binary:equivalence}(a)} If all labelings $x\in\calD^V$ have unique costs $g(x)$ then
the claim follows from Theorem~\ref{th:mult:binary:equivalence}(b).
The general case can be reduced to the case above by adding 
function $\delta g(x')=\sum_{i\in V}\epsilon_i d(x_i,x'_i)$ to $g$
where $\epsilon_i>0$ are infinitesimally small numbers
and $x\in\arg\min\limits_{x\in\calD^V} g(x)$ is the labeling chosen in Theorem \ref{th:mult:binary:equivalence}.

\myparagraph{Theorem~\ref{th:PottsAlg}}
\ifTR
Let $y\in\{a,b\}^V$ be the output of line 3 and $x^a\in\calD_a^{V_a}$, $x^b\in\calD_b^{V_b}$ be minimizers of $g^a$ and $g^b$ respectively.
We will write labelings in $\calD^V$ as $(z^a,z^b)$ where $z^a\in\calD^{V_a}$ and $z^b\in\calD^{V_b}$.

Let $\hat x=(\hat x^a,\hat x^b)\in\calD^V$ be a minimizer of $g$ with $\hat x^{[ab]}=y$
(it exists by Theorem~\ref{th:mult:binary:equivalence}(b)).
Condition $\hat x^{[ab]}=y$ implies that
 $\hat x^a\in\calD_a^{V_a}$ and $\hat x^b\in\calD_a^{V_b}$.
Let us prove that $(x^a,\hat x^b)$ is a minimizer of $g$.
Let  $\hat g^a:\calD_a^{V_a}$ be the function $\hat g^a(z^a)=g(z^a,\hat x^b)$.
It can be checked that $\hat g^a(z^a)=g^a(z^a)+const$ for all $z^a\in\calD_a^{V_a}$.
Therefore, $x^a$ is a minimizer of $\hat g^a(z^a)$, and so 
$$
g(x^a,\hat x^b)=\hat g(x^a)\le \hat g(\hat x^a)=g(\hat x^a,\hat x^b)=\min_{x\in\calD^V}g(x)
$$
A completely analogous argument shows that $(x^a,x^b)$ is a minimizer of $g$ as well.
\else
This theorem follows from Theorem \ref{th:mult:binary:equivalence}(b); details are given in the suppl. material.
\fi

\section{Implementation details}\label{sec:details}
In this section we sketch implementation details
of Algorithm \ref{alg} applied to the function constructed in section~\ref{sec:prelim}
(so $T$ is a star graph with nodes $\calD=\calL\cup\{\oo\}$).
We will discuss, in particular, how to extract optimal flows.

We use the edge insertion operation at each call of {\tt SPLIT}
except when $\calD=\{a,\oo\}$ for some $a\in\calL$.
Thus, computations can be described in terms of a binary tree 
whose nodes correspond to subsets of labels $\calA\subseteq\calL$
(Fig.~\ref{fig:binaryTree}). Let $\Omega$ be the set of nodes of this tree.
For each $\calA\in\Omega$ we run a maxflow algorithm; let $V_\calA\subseteq V$ be the set of nodes
involved in this computation.
Note that sets $V_\calA$ for nodes $\calA$ at a fixed depth
form a disjoint union of $V$ (except possibly the last level).
Therefore, these maxflow computations can be treated
as a single maxflow on the graph of the original size.
The total number of such computations is $\lceil 1+\log_2 k\rceil$ (the number of levels of the tree).

\begin{figure}[t]
\vskip 0.0in
\normalsize
\begin{center}
\begin{tabular}{c}
\includegraphics[scale=0.19]{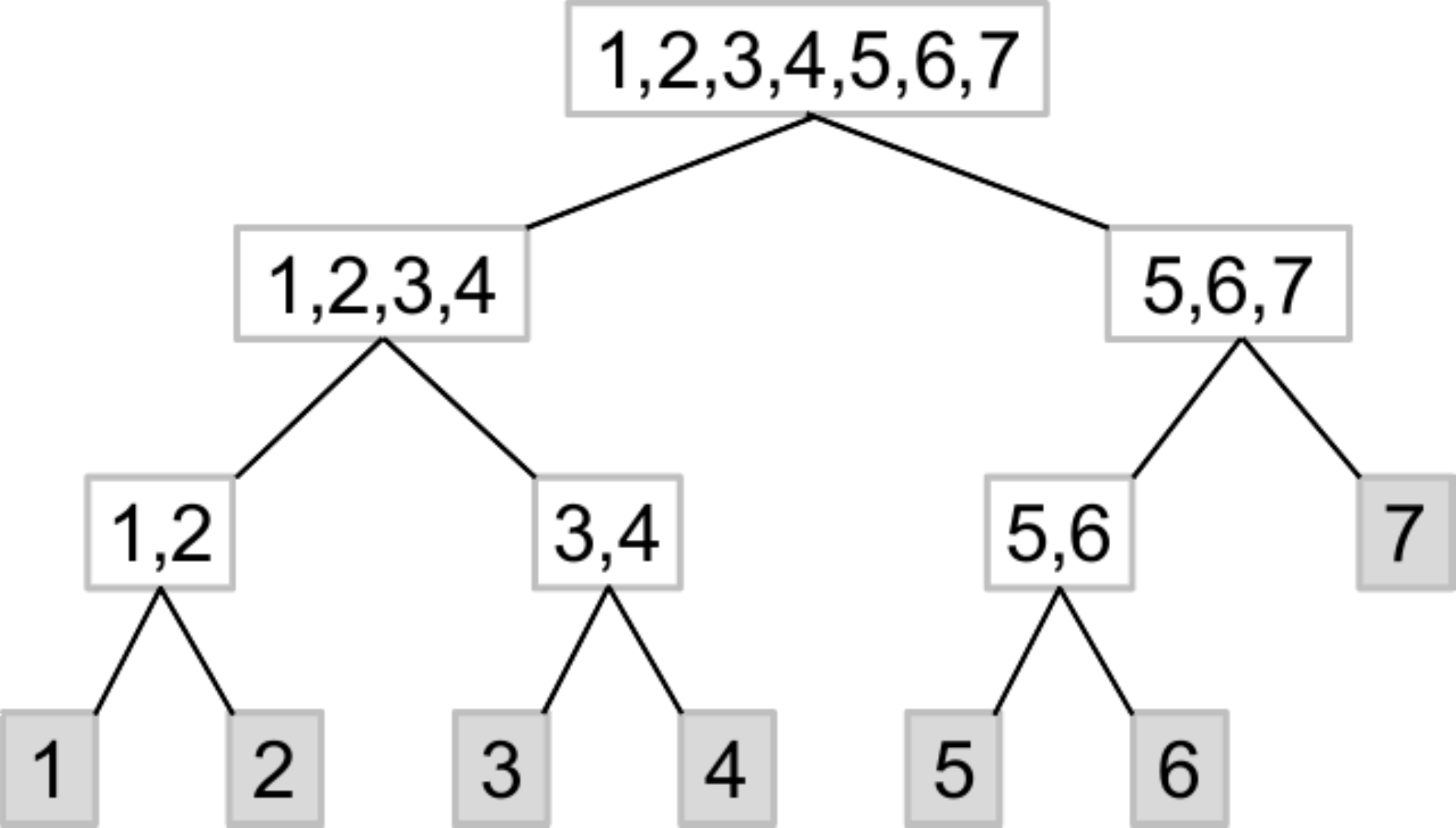}
\end{tabular}
\caption{Binary tree for the set $\calL=\{1,\ldots,7\}$.
Each node is a subset $\calA\subseteq\calL$; $\calL$
is the root and singleton subsets are the leaves.
}
\label{fig:binaryTree}
\end{center}
\vskip -0.2in
\end{figure}

For each $\calA\in\Omega$ we set up a graph with the set of nodes $V_\calA\cup\{s,t\}$ and the cut function
\begin{equation*}
f_\calA(S\cup\{s\},T\cup\{t\})\!=\!\sum_{i\in\calV_\calA}u^\calA_i[i\!\in\! T] + \sum_{\{i,j\}}\lambda_{ij}[i\!\in\! S,j\!\in\! T]
\end{equation*}
($S\stackrel{.}{\cup} T=V_\calA$).
To define $u^\calA_i$, we need to specify the meaning of
the source $s$ and the sink $t$. For non-leaf nodes $\calA\in\Omega$
the source corresponds to the left child $\calA_\ell$ 
and the sink corresponds to the right child $\calA_r$; we then have
\begin{equation}
u^\calA_i\in[g_i(\oo)-\min_{a\in\calA_{\ell}}g_i(a),-g_i(\oo)+\min_{a\in\calA_{r}}g_i(a)]
\label{eq:NGFNAGJNSAF}
\end{equation}
where we use the current value of $g_i(\oo)$ (it is zero initially and then gets decreased).
For a leaf $\calA=\{a\}$ we use a different intepretation: $s$ corresponds to label $a$
and $t$ corresponds to label $\oo$, therefore $u^\calA_i=g_i(\oo)-g_i(a)$.

We perform all maxflow computations on a single graph.
We use the Boykov-Kolmogorov algorithm~\cite{BK:PAMI04} with flow and search trees recycling~\cite{Kohli:ICCV05}.
We maintain values $u_i$ for nodes $i\in V$ that give
the current cut functions encoded by the residual graph.
After computing maxflow at a non-leaf node $\calA$ the residual graph is modified as follows.
First, for each arc $(i\rightarrow j)$ from 
the source to the sink component we do the following:
\begin{list}{$\bullet$}{\leftmargin=1.4em \itemindent=0em \itemsep=-2pt}
\item[1.] Set $u_i:=u_i-\lambda_{ij}$ and $u_j:=u_j+\lambda_{ij}$;
this simulates pushing flow $\lambda_{ij}$ along the path $t\rightarrow j\rightarrow i\rightarrow s$.
\item[2.] Remove arcs $(i\rightarrow j),(j\rightarrow i)$ from the graph.
\item[3.] Update $g_i(\oo):=g_i(\oo)-\lambda_{ij}$. 
\end{list}
Now we need to set unary costs for maxflow computations at the children $\calA_\ell$, $\calA_r$ of $\calA$.
Consider node $i\in V_{\calA_c}$, $c\in\{\ell,r\}$.
First, we compute the appropriate value $u^{\calA_c}_i$;
if $\calA_c$ is not a leaf then we compute interval~\eqref{eq:NGFNAGJNSAF}
for $\calA_c$ and choose the value $u^{\calA_c}_i$ from the interval closest to $u_i$.\footnote{It can
be shown that we only need to know $a^\ast\in\arg\min_{a\in\calL}g_i(a)$, $g_i(a^\ast)$ and $g_i(\oo)$ for that.}
Then we change the graph by adding $\delta^{\calA_c}_i=u^{\calA_c}_i-u_i$ to the capacity of $(s\rightarrow i)$
(or subtracting from the capacity of $(i\rightarrow t)$),
and update $u_i:=u^{\calA_c}_i$.

\myparagraph{Remark 1} \small 
The following property can be shown.
Suppose that node $i\in V$ ended up at a leaf $\{a\}\in\Omega$.
Let $\calP$ be the path from $\calL$ to $\{a\}$,
and define values $c^\calA_i$ for $\calA\in\calP$
so that $c^\calL_i=u^\calL_i$ 
and $c^\calB_i=c^\calA_i+\delta^\calB_i$ for edges $(\calA,\calB)\in\calP$.
Then values $c^\calA_i$ for nodes $\calA\in\calP-\{a\}$ are non-decreasing
w.r.t. the {\bf inorder} of the binary tree.\footnote{The monotonicity would also hold for the leaf $\{a\}$
if we changed the meaning of the source and the sink for computations at the leaves $\{a\}\in\Omega$
that are right children. However, we found it more convenient to use our interpretation.}

Such monotonicity
implies that computations at non-leaf nodes fall into the framework
of {\em parametric maxflow} of Gallo et al.~\cite{Gallo:SICOMP89}. As shown in \cite{Gallo:SICOMP89},
all computations can be done with the same worst-case complexity as
a single maxflow computation. However, this requires a more complex implementation, namely
running in parallel two push-relabel algorithms. Experiments in~\cite{Babenko:2007} suggest
that this is less efficient than a naive scheme.
\normalsize

\myparagraph{Extracting flows} Let us fix label $a\in\calL$.
Recall that Algorithm \ref{alg} yields the minimum of function $f^a$ given by~\eqref{eq:faKovtun}.
An important question is how to obtain an optimal flow
correspoding to this computation; as reported in~\cite{Alahari:PAMI10}, using this flow speeds up
the alpha expansion algorithm.

It suffices to specify the flow $\xi_{ij}$ for each arc $(i\rightarrow j)$ with $\{i,j\}\in E$ (the flow from the source and to the sink
can then be easily computed). We used the following rule.
For each edge we store flow $\xi''_{ij}$ after the final maxflow and flow $\xi'_{ij}$ immediately before
maxflows at the leaves. For each node $i\in V$ we also store leaf $\calA_i\in\Omega$
at which node $i$ ended up.
We now set flow $\xi_{ij}$ as follows:
\begin{list}{$\bullet$}{\leftmargin=1em \itemindent=0em \itemsep=-2pt}
\item if $\calA_i=\calA_j=\{a\}$ set $\xi_{ij}:=\xi''_{ij}$
\item otherwise let $\calA\in\Omega$ be the least common ancestor of $\calA_i, \calA_j, \{a\}$ in
the binary tree. If $\{a\}$ belongs to the left subtree of $\calA$ then set $\xi_{ij}:=\xi'_{ij}$,
otherwise set $\xi_{ij}:=\xi'_{ji}$.
\end{list}
We hope to prove to correctness of this procedure in a future publication;
at the moment we state it as a conjecture that was verified experimentally.
We mention that we were not able to find a scheme that would store only one flow per arc.

\section{Relation to $k$-submodular functions}\label{sec:ksub}
In this section we discuss some connections between techniques described earlier
and {\em $k$-submodular functions} introduced in~\cite{Kolmogorov:MFCS11,Huber:ISCO12}.
We also define {\em $k$-submodular relaxations} of discrete functions $f:\calL^V\rightarrow\mathbb R$
which generalize {\em bisubmodular relaxations}~\cite{Kolmogorov:bisubmodularRelaxation} of pseudo-Boolean functions
in a natural way. 


\begin{defn}[$k$-submodularity]
Let $\preceq$ be the partial order on $\calD=\calL\cup\{\oo\}$ such that $a\prec b$ iff $a=\oo$ and $b\in\calL$.
Define binary operations $\sqcap,\sqcup:\calD\times\calD\rightarrow \calD$ via
\begin{equation*}
(a\sqcap b,a\sqcup b)=\begin{cases}
(\oo,\oo) & \mbox{if } a,b\in\calL, a\ne b \\
(\min \{a,b\},\max \{a,b\}) & \mbox{otherwise}
\end{cases}
\end{equation*}
where $\min$ and $\max$ are taken w.r.t. partial order $\preceq$.
Function $g:\calD^V\rightarrow\mathbb R$ is $\mbox{called {\em $k$-submodular} (with $k=|\calL|$) if}$
\begin{equation}
g(x\sqcap y)+g(x\sqcup y)\le g(x)+g(y)\qquad \forall x,y\in \calD^V
\end{equation}
where operations $\sqcap,\sqcup$ are applied component-wise.
\end{defn}
It is easy to check that function $g$ constructed in section~\ref{sec:prelim}
is $k$-submodular. Another way to obtain a $k$-submodular function is as follows.
Consider some function $f:\calL^V\rightarrow \mathbb R$.
We say that function $g:\calD^V\rightarrow\mathbb R$ is a {\em $k$-submodular relaxation} of $f$
if  $g(x)=f(x)$ for all $x\in \calL^V$, and function $g$ is $k$-submodular.
It can be seen that any function $f:\calL^V\rightarrow\mathbb R$ admits a $k$-submodular
relaxation; we can set, for example, $g(x)=f(x)$ for $x\in\calL^V$ and $g(x)=C$ for $x\in\calD^V-\calL^V$,
where $C\le \min_{x\in \calL^V}f(x)$.

$k$-submodular relaxations for $k=2$ have been studied in~\cite{Kolmogorov:bisubmodularRelaxation}
under the name {\em bisubmodular relaxations}. It was shown that if $f$ is a {\em quadratic} pseudo-Boolean
function then the tightest bisubmodular relaxation is equivalent to the roof duality relaxation~\cite{Hammer:MP84}.
It was also proved that bisubmodular relaxations possess the {\em persistency}, or {\em partial optimality} property.
The argument of~\cite{Kolmogorov:bisubmodularRelaxation} extends trivially to $k$-submodular relaxations, 
as the following proposition shows.
\begin{prop}
Let $g$ be a $k$-submodular relaxation of $f$ and 
 $y^\ast\in\calD^V$ be a minimizer of $g$.
Function $f$ has a minimizer $x^\ast\in\calL^V$ such
that $x^\ast_i=y^\ast_i$ for all $i\in V$ with $y^\ast_i\in\calL$.
\end{prop}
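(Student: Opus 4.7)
My plan is to construct the required minimizer $x^\ast$ explicitly from $y^\ast$ and an arbitrary minimizer of $f$, via a double application of the $k$-submodularity inequality. Concretely, pick any $\bar x \in \arg\min_{x\in\calL^V} f(x)$, set $w = y^\ast \sqcup \bar x$, and define $x^\ast = y^\ast \sqcup w$. I would first verify two combinatorial facts by a case split on $y^\ast_i \in \{\oo\}\cup\calL$: (i) $x^\ast \in \calL^V$, and (ii) $x^\ast_i = y^\ast_i$ for every $i$ with $y^\ast_i\in\calL$. The only case that is not immediate is $y^\ast_i \in \calL$ with $\bar x_i \ne y^\ast_i$: then $w_i = \oo$, and the second application of $y^\ast \sqcup (\cdot)$ restores $x^\ast_i = y^\ast_i \sqcup \oo = y^\ast_i$. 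The cases $y^\ast_i = \oo$ (giving $x^\ast_i = \bar x_i \in\calL$) and $y^\ast_i = \bar x_i$ (giving $x^\ast_i = y^\ast_i$) are trivial.

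Next, I would apply $k$-submodularity to the pair $(y^\ast,\bar x)$:
\begin{equation*}
g(y^\ast\sqcap\bar x)+g(w) \;\le\; g(y^\ast)+g(\bar x)=g(y^\ast)+f(\bar x).
\end{equation*}
Minimality of $y^\ast$ gives $g(y^\ast\sqcap\bar x)\ge g(y^\ast)$, hence $g(w)\le f(\bar x)$. Then I would apply $k$-submodularity to the pair $(y^\ast,w)$. The key small identity is $y^\ast\sqcap w = y^\ast\sqcap\bar x$, which follows by the same case split (in all three cases above, $y^\ast_i\sqcap w_i$ coincides with $y^\ast_i\sqcap\bar x_i$). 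Combined with $y^\ast\sqcup w = x^\ast$ this yields
\begin{equation*}
g(y^\ast\sqcap\bar x)+g(x^\ast) \;\le\; g(y^\ast)+g(w),
\end{equation*}
and invoking $g(y^\ast\sqcap\bar x)\ge g(y^\ast)$ a second time gives $g(x^\ast)\le g(w)\le f(\bar x)$. Since $x^\ast\in\calL^V$ and $g$ is a $k$-submodular relaxation, $f(x^\ast)=g(x^\ast)\le f(\bar x)=\min_{\calL^V} f$, so $x^\ast\in\arg\min f$ with the required agreement property.

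The main conceptual obstacle is spotting the right construction. The operation $\sqcup$ here is \emph{not} idempotent (unlike a lattice join): one application of $y^\ast\sqcup(\cdot)$ collapses the disagreement coordinates $\{i : y^\ast_i\in\calL,\,\bar x_i\ne y^\ast_i\}$ to $\oo$, and a second application of $y^\ast\sqcup(\cdot)$ then fills those very coordinates back in with the labels $y^\ast_i$. Once this trick is identified and the small identity $y^\ast\sqcap(y^\ast\sqcup\bar x)=y^\ast\sqcap\bar x$ is checked, the rest is a mechanical two-step telescoping of the defining inequality against the minimality of $y^\ast$.
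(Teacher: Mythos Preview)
Your proof is correct and essentially identical to the paper's: both construct $x^\ast=(\bar x\sqcup y^\ast)\sqcup y^\ast$ (up to commutativity of $\sqcup$) and use $k$-submodularity together with minimality of $y^\ast$ twice. The only cosmetic difference is that the paper first isolates the one-line lemma $g(z\sqcup y^\ast)\le g(z)$ for all $z$ and applies it with $z=\bar x$ and then $z=\bar x\sqcup y^\ast$, so it never needs your auxiliary identity $y^\ast\sqcap w=y^\ast\sqcap\bar x$; you could likewise drop that identity and directly use $g(y^\ast\sqcap w)\ge g(y^\ast)$.
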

\begin{proof}
First, observe that for any $z\in\calD^V$ there holds
$g(z\sqcup y^\ast)\le g(z)$ since $g(z\sqcup y^\ast)- g(z)\le g(y^\ast)-g(z\sqcap y^\ast)\le 0$.

Let $x\in\calL^V$ be a minimizer of $f$, and define $x^\ast=(x\sqcup y^\ast)\sqcup y^\ast$.
It can be checked that $x^\ast_i=y^\ast_i$ if $y^\ast_i\in\calL$, and $x^\ast_i=x_i$ if $y^\ast_i=\oo$.
Thus, $x^\ast\in\calL^V$. Labeling $x^\ast$ is a minimizer of $f$
since $$f(x^\ast)=g((x\sqcup y^\ast)\sqcup y^\ast)\le g(x\sqcup y^\ast)\le g(x)=f(x)$$
\end{proof}

Thus, $k$-submodular relaxations can be viewed as a generalization of the roof duality relaxation
to the case of multiple labels. Recently, Thapper and \v{Z}ivn\'y showed~\cite{ThapperZivny:FOCS12} 
that a $k$-submodular function $g$ can be minimized in polynomial time
if $g$ is represented as a sum of low-order $k$-submodular terms. (This was proved by showing
the tightness of the {\em Basic LP relaxation} (BLP); when $g$ is a sum of unary and pairwise terms,
BLP is equivalent to the standard Schlesinger's LP~\cite{Werner:PAMI07}.)
This suggests a new possibility for obtaining partial optimality for discrete functions $f$.

\myparagraph{Potts model} Let us compare the approach above
with the Kovtun's approach in the case of the Potts energy function $f$ from eq.~\eqref{eq:fPotts}.
A natural $k$-submodular relaxation of $f$ is the function
\begin{equation}
\tilde g(x)=\sum_{i\in V}\tilde g_i(x_i)+\frac{1}{2}\sum_{\{i,j\}\in E}\lambda_{ij} d(x_i,x_j)
\label{eq:GASIFNALKSGFA}
\end{equation}
where $\tilde g_i$ is a $k$-submodular relaxation of $f_i$ and $d$ is the tree metric used
in section~\ref{sec:prelim}.
It is natural to set $\tilde g_i(\oo)$ to the maximum possible value such that $\tilde g_i$ is $k$-submodular;
this is achieved by $\tilde g_i(\oo)=\frac{1}{2}[f_i(a_1)+f_i(a_2)]$
where $f_i(a_1)$ is the smallest value of $f_i$ and $f(a_2)$ is the second smallest.

The proposition below shows that minimizing $\tilde g$ yields 
the same or fewer number of labeled nodes compared to the Kovtun's approach.
\ifTR
\else
Its proof is given in the suppl. material.
\fi
\begin{prop}
Let $g$ be the function~\eqref{eq:PottsRelaxation}
corresponding to the Kovtun's approach, and $\tilde g$ be the $k$-submodular relaxation of $f$ given by~\eqref{eq:GASIFNALKSGFA}.
Assume for simplicity that $g$ and $\tilde g$ have unique minimizers $x$ and $\tilde x$ respectively.
If $\tilde x_i=a\ne \oo$ for node $i\in V$ then $x_i=a$.
\label{prop:proof}
\end{prop}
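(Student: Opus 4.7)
The plan is to reduce the multi-label comparison to a comparison of two binary minimizers on $\{a,\oo\}^V$ via Theorem~\ref{th:mult:binary:equivalence}. First I would check that the unary term $\tilde g_i$ is itself $T$-convex: since $T$ is the star rooted at $\oo$, the only nontrivial inequality is $2\tilde g_i(\oo)\le\tilde g_i(a)+\tilde g_i(c)$ for distinct $a,c\in\calL$, i.e.\ $f_i(a_1)+f_i(a_2)\le f_i(a)+f_i(c)$, which is immediate from the definition of $a_1,a_2$ as the two smallest values of $f_i$. Applying Theorem~\ref{th:mult:binary:equivalence}(a) to both $g$ and $\tilde g$, the projections $y^{\ast\ast}:=x^{[a,\oo]}$ and $y^\ast:=\tilde x^{[a,\oo]}$ minimize $g$ and $\tilde g$ respectively over $\{a,\oo\}^V$; uniqueness of $x$ and $\tilde x$ combined with Theorem~\ref{th:mult:binary:equivalence}(b) makes these binary minimizers unique as well. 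Because $y^\ast_i=a\Leftrightarrow\tilde x_i=a$ and $y^{\ast\ast}_i=a\Leftrightarrow x_i=a$, it is enough to show the inclusion $\{i:y^\ast_i=a\}\subseteq\{i:y^{\ast\ast}_i=a\}$.

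The key identity is a direct calculation on $\{a,\oo\}^V$:
\[
2\tilde g(y)\;=\;g(y)+C+\sum_{i\in V}\phi_i(y_i),
\]
where $C=\sum_i[f_i(a_1)+f_i(a_2)]$ and $\phi_i(\oo)=0$, $\phi_i(a)=[f_i(a)-f_i(a_2)]\cdot[a\neq a_1]$. The computation uses $2\tilde g_i(\oo)-g_i(\oo)=f_i(a_1)+f_i(a_2)$ and $2\tilde g_i(a)-g_i(a)=f_i(a)+f_i(\bar a)$, after which a case split $a=a_1$ vs.\ $a\ne a_1$ simplifies $f_i(\bar a)$; nonnegativity $\phi_i(a)\ge 0$ follows from the definition of $a_2$. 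Consequently $y^\ast$ minimizes $g+\Phi$ on $\{a,\oo\}^V$, where $\Phi(y)=\sum_i\phi_i(y_i)$ is a nonnegative modular penalty concentrated on label~$a$, while $y^{\ast\ast}$ minimizes $g$.

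The conclusion then follows from a standard binary parametric-submodular monotonicity argument. Identifying $\oo\mapsto 0$ and $a\mapsto 1$, the restriction of $g$ to $\{a,\oo\}^V$ is a submodular pseudo-Boolean function, since its pairwise part is the Ising term $\lambda_{ij}[y_i\ne y_j]$ with $\lambda_{ij}\ge 0$. Let $A=\{i:y^\ast_i=a\}$ and $B=\{i:y^{\ast\ast}_i=a\}$, and assume for contradiction that $A\not\subseteq B$. Uniqueness of $y^\ast$ then gives the strict inequality $(g+\Phi)(A)<(g+\Phi)(A\cap B)$; adding it to the submodular inequality $(g+\Phi)(A\cap B)+(g+\Phi)(A\cup B)\le(g+\Phi)(A)+(g+\Phi)(B)$ yields $(g+\Phi)(A\cup B)<(g+\Phi)(B)$, and since $\Phi(A\cup B)\ge\Phi(B)$ this forces $g(A\cup B)<g(B)$, contradicting uniqueness of $y^{\ast\ast}$. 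The main obstacle is the bookkeeping for the identity $2\tilde g-g=C+\Phi$ (in particular the case split on whether $a$ is the unique minimizer $a_1$); once that is in hand the monotonicity step is routine.
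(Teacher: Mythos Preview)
Your proof is correct and follows essentially the same route as the paper: reduce to binary minimizers on $\{a,\oo\}^V$ via Theorem~\ref{th:mult:binary:equivalence}, observe that (after scaling $\tilde g$ by $2$ so the pairwise parts agree) the unary costs satisfy $\tilde u_i-u_i=f_i(a)+f_i(\bar a)-f_i(a_1)-f_i(a_2)\ge 0$, and conclude by monotonicity of minimum cuts. The only cosmetic differences are that you package the unary comparison as the identity $2\tilde g=g+C+\Phi$ with $\Phi\ge 0$, and you prove the monotonicity step directly via submodularity rather than citing the parametric-maxflow lemma of Gallo et al.\ as the paper does; you also make explicit the $T$-convexity of $\tilde g_i$, which the paper uses implicitly when invoking Theorem~\ref{th:mult:binary:equivalence} for $\tilde g$.
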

\ifTR
\begin{proof}
Let us define functions $f^a,\tilde f^a:\{a,\oo\}^V\rightarrow \mathbb R$ via
\begin{eqnarray*}
f^a(y)&=&\sum_{i\in V} u_i [y_i=a] + \sum_{\{i,j\}\in E} \lambda_{ij}[y_i\ne y_i] \\
\tilde f^a(y)&=&\sum_{i\in V} \tilde u_i [y_i=a] + \sum_{\{i,j\}\in E} \lambda_{ij}[y_i\ne y_i]
\end{eqnarray*}
where $u_i=g_i(a)-g_i(\oo)$ and $\tilde u_i=2[\tilde g_i(a)-\tilde g_i(\oo)]$.
By Theorem \ref{th:mult:binary:equivalence}, labelings $y= x^{[a\oo]}$ and $\tilde y=\tilde x^{[a\oo]}$ are unique minimizers of $f^a$ and $\tilde f^a$ respectively. 
There holds $u_i\le\tilde u_i$ for all $i\in V$ since
\begin{eqnarray*}
u_i-\tilde u_i&=&[f_i(a)-f_i(\bar a)] - 2\left[f_i(a)-\frac{f_i(a_1)+f_i(a_2)}{2}\right] \\
 &=&[f_i(a_1)+f_i(a_2)] - [f_i(a)+f_i(\bar a)]\;\le\; 0
\end{eqnarray*}
Therefore, the claim above follows from the standard result about the parametric maxflow (\cite{Gallo:SICOMP89}, Lemma 2.8;
see also the proof of Theorem~\ref{th:coarea:parametric}(b)).
\end{proof}
\else
\vspace{-5pt}
\fi
Although a $k$-submodular relaxation of the Potts energy turns out
to be worse than Kovtun's approach, 
 there are clear similarities between the two
(e.g.\ they can be solved by the same technique).
We believe that exploring both approaches (or their combination)
can be a fruitful direction for obtaining partial optimality for more general functions.

\vspace{-7pt}
\section{Experimental results}\label{sec:experiments}
\vspace{-4pt}

We applied our technique to the stereo segmentation problem on the Middlebury data~\cite{Scharstein2002,Scharstein2003,Scharstein2007}.
Computations consist of two phases: (1) solve the Kovtun's approach, and (2) run the alpha-expansion algorithm
for the unlabeled (or ``{\em non-persistent}'') part until convergence. For phase 1 we compared the speed of our algorithm (which we call ``{\em $k$-sub Kovtun}'')
with the `Reduce' method of Alahari et al.~\cite{Alahari:PAMI10}.
For phase 2 we used the FastPD method of Komodakis et al.~\cite{Komodakis07, Komodakis08}. 
We used original implementations from~\cite{Alahari:PAMI10} and~\cite{Komodakis07, Komodakis08} and
a Core i7 machine with 2.3GHz.

As a by-product, $k$-sub Kovtun produces a labeling which we call a {\em Kovtun labeling}: pixel $i$ is assigned the label $a$ where it ended up,
as described in Sec.~\ref{sec:details}. Empirically, this labeling has a good quality - see below.

\ifTR

\begin{figure*}
\begin{tabular}{@{\hspace{-2pt}}c@{\hspace{4pt}}c@{\hspace{4pt}}c}
   \includegraphics[width=0.33\linewidth]{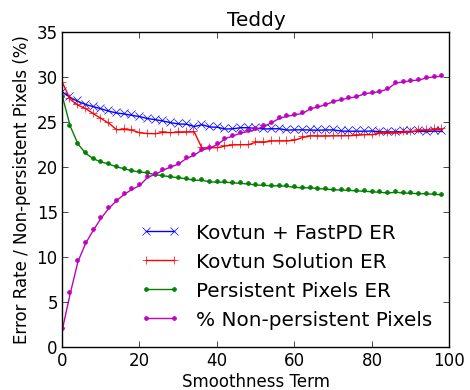} &
   \includegraphics[width=0.33\linewidth]{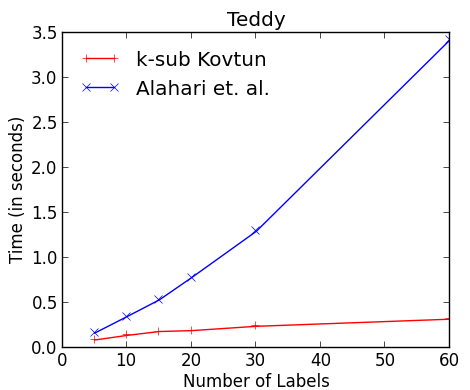} &
   \includegraphics[width=0.33\linewidth]{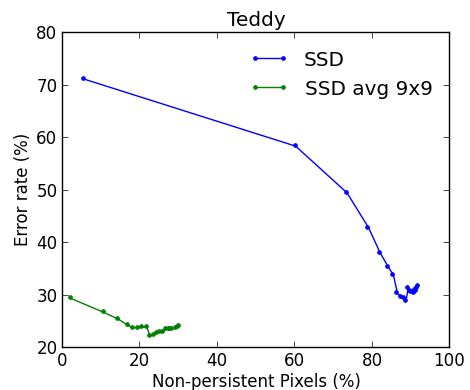} \\
\\
   \includegraphics[width=0.33\linewidth]{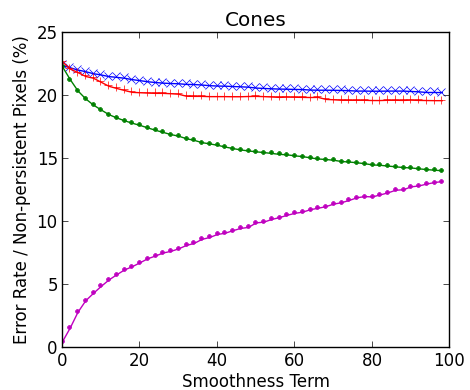} &
   \includegraphics[width=0.33\linewidth]{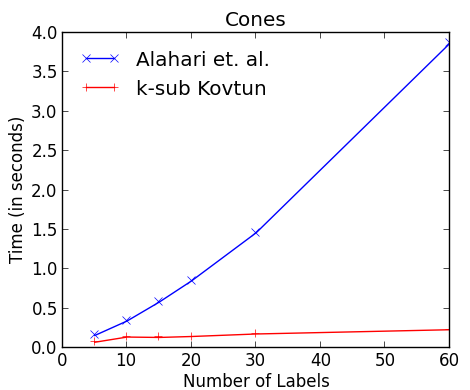} &
   \includegraphics[width=0.33\linewidth]{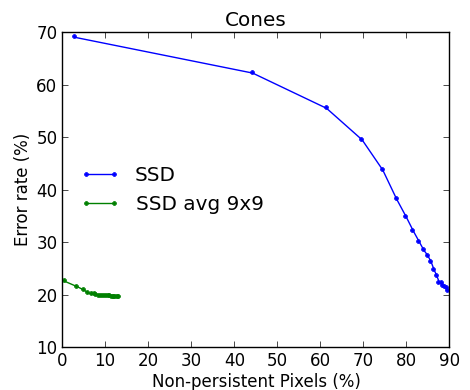} \\
\\
   \includegraphics[width=0.33\linewidth]{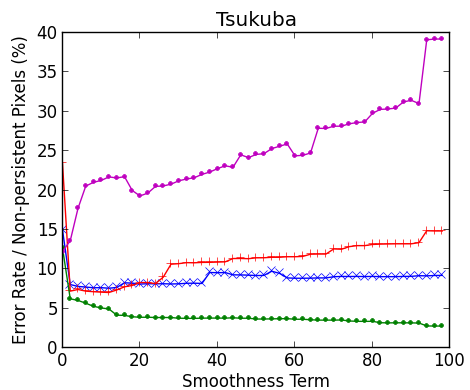} &
   \includegraphics[width=0.33\linewidth]{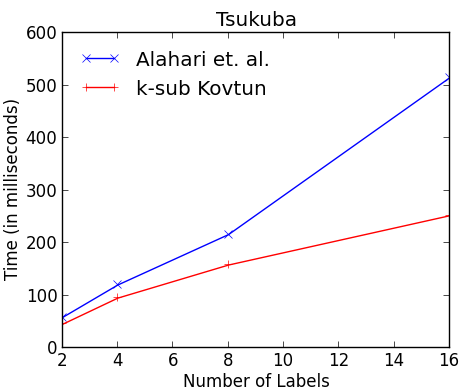} &
   \includegraphics[width=0.33\linewidth]{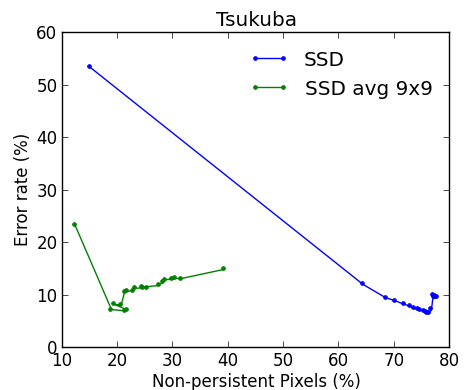} \\
\\
   \includegraphics[width=0.33\linewidth]{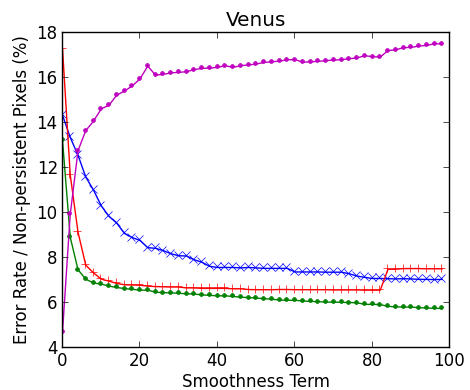} &
   \includegraphics[width=0.33\linewidth]{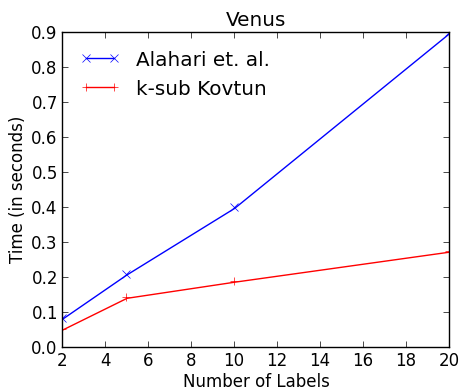} &
   \includegraphics[width=0.33\linewidth]{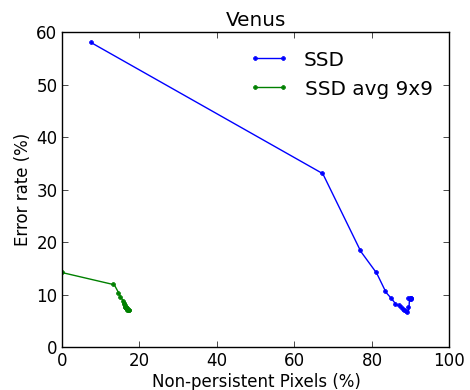} \\

   (a) & (b) & (c)
\end{tabular}

    \caption{
Results for images ``Teddy'', ``Cones'', ``Tsukuba'', ``Venus'': (a) dependency of the error rate on the smoothness term $\lambda$, (b) comparison of run-times of the 'Reduce' approach of \cite{Alahari:PAMI10} and $k$-sub Kovtun, (c) effect of the average costs aggregation - data points correspond to different values of the smoothness term from $0$ to $100$.}
   \label{fig:main1}
\end{figure*}

\begin{figure*}
\begin{tabular}{@{\hspace{-2pt}}c@{\hspace{4pt}}c@{\hspace{4pt}}c} 
   \includegraphics[width=0.33\linewidth]{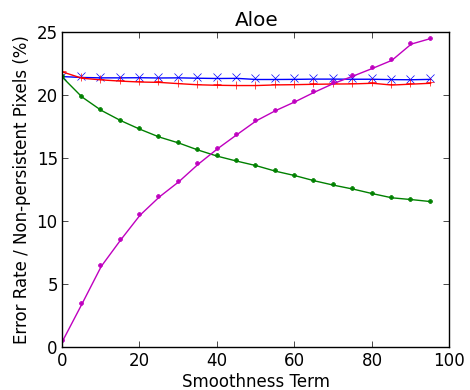} &
   \includegraphics[width=0.33\linewidth]{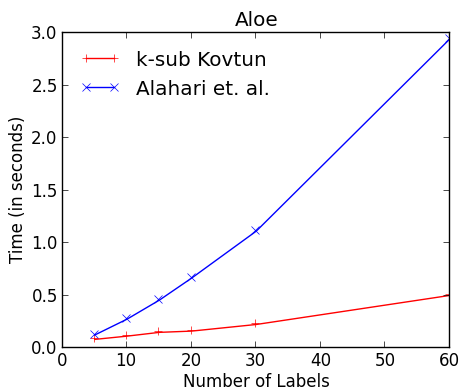} &
   \includegraphics[width=0.33\linewidth]{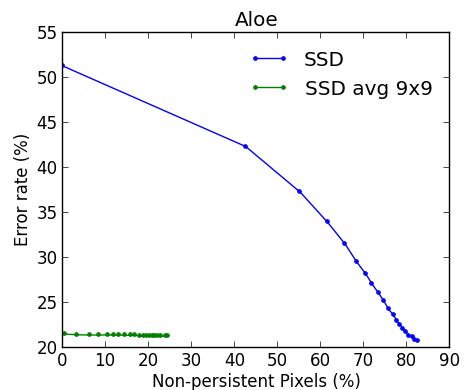} \\
\\
   \includegraphics[width=0.33\linewidth]{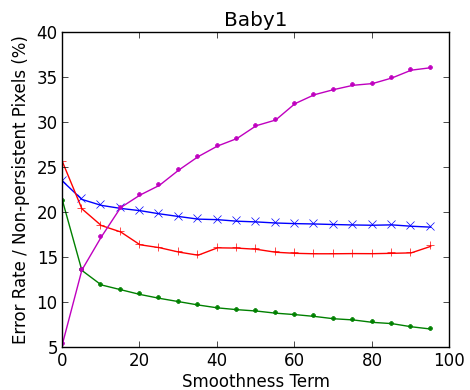} &
   \includegraphics[width=0.33\linewidth]{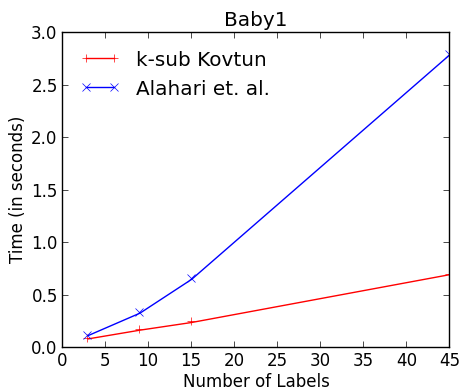} &
   \includegraphics[width=0.33\linewidth]{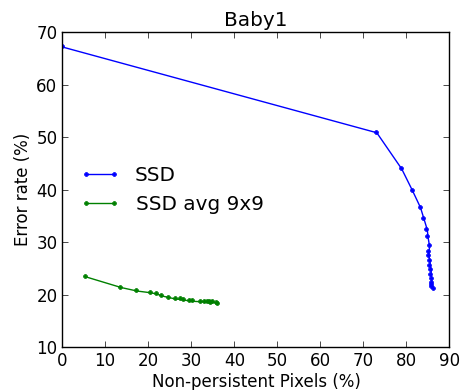} \\
\\
   \includegraphics[width=0.33\linewidth]{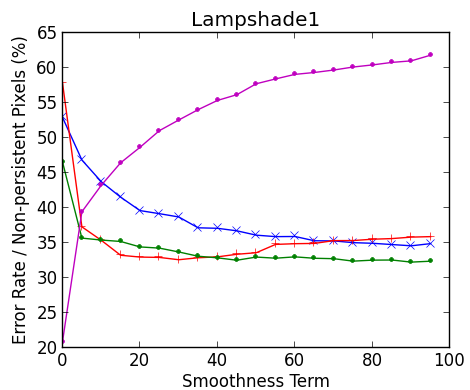} &
   \includegraphics[width=0.33\linewidth]{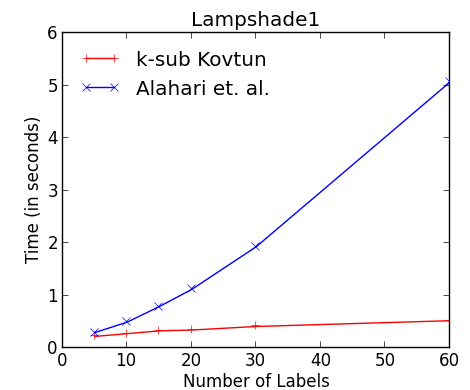} &
   \includegraphics[width=0.33\linewidth]{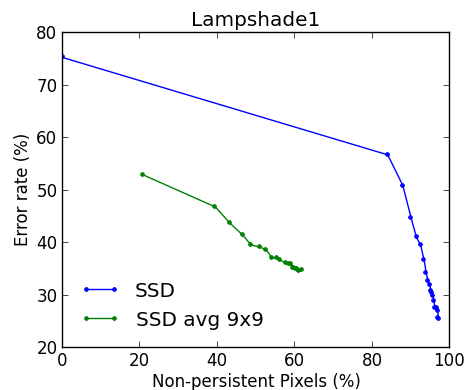} \\
\\
   \includegraphics[width=0.33\linewidth]{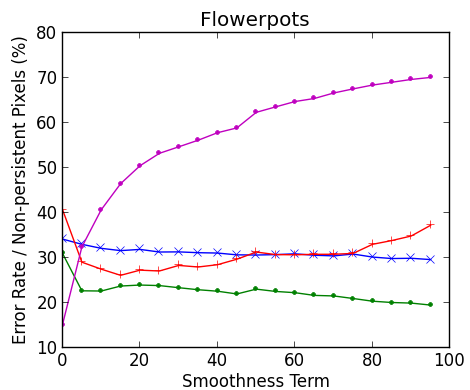} &
   \includegraphics[width=0.33\linewidth]{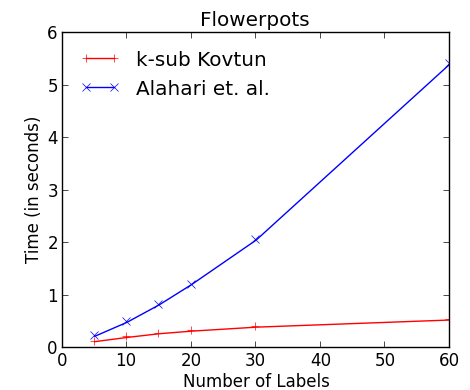} &
   \includegraphics[width=0.33\linewidth]{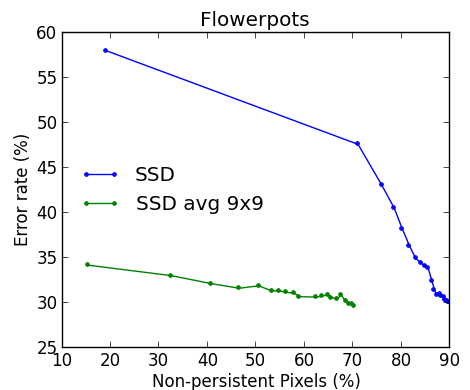} \\

      (a) & (b) & (c)
\end{tabular}
 
    \caption{Results for the images ``Aloe'', ``Baby1'', ``Lampshade1'' and ``Flowerpots'':
Results for images ``Teddy'', ``Cones'', ``Tsukuba'', ``Venus'': (a) dependency of the error rate on the smoothness term $\lambda$, (b) comparison of run-times of the 'Reduce' approach of \cite{Alahari:PAMI10} and $k$-sub Kovtun, (c) effect of the average costs aggregation - data points correspond to different values of the smoothness term from $0$ to $100$.}
   \label{fig:main2}
\end{figure*}

\else

\begin{figure*}[!t]
\begin{tabular}{@{\hspace{-2pt}}c@{\hspace{4pt}}c@{\hspace{4pt}}c}
   \includegraphics[width=0.33\textwidth]{img/paper/2_2_Teddy-Lambda-Non-persistent-100_d_2} &
   \includegraphics[width=0.33\textwidth]{img/paper/1_2_Teddy-Number_of_labels-Time} &
   \includegraphics[width=0.33\textwidth]{img/paper/3_2_Avg_eff_Teddy-Bad-Non-persistent-100_d_5} \\
(a) & (b) & (c)
\end{tabular}
 
    \caption{Results for images ``Teddy'': (a) dependency of the error rate on the smoothness term $\lambda$, (b) comparison of run-times of the 'Reduce' approach of \cite{Alahari:PAMI10} and $k$-sub Kovtun, (c) effect of the average costs aggregation - data points correspond to different values of the smoothness term from $0$ to $100$.
{\bf Results for other stereo pairs are given in the suppl. material.} The plots exhibit a similar behaviour, except that  occasionally the ER of the Kovtun's labeling
becomes worse than the ER of alpha-expansion for a sufficiently large $\lambda$ (plots (a)),
and for Lampshade1 and Aloe
the best ER of SSD was lower than the best ER of aggregated SSD (plots (c)).\vspace{-6pt}
}
   \label{fig:main}
\end{figure*}

\fi

\myparagraph{Matching costs} The number of labeled pixels strongly depends on the method for computing matching costs $f_i(\cdot)$ and on the
regularization parameter $\lambda$ (which is the same for all edges). We tested the SSD matching costs and SSD cost averaged over the $9\times 9$ window
centered at pixel $i$. The latter method  gave a lower error\footnote{As in~\cite{Scharstein2002}, we define the error rate 
as the percentage of pixels whose predicted label differs from the ground truth label by more than 1.}
in 6 out of 8 images
\ifTR
(see Fig.~\ref{fig:main1},\ref{fig:main2}(c))
\else
(see Fig.~\ref{fig:main}(c))
\fi
 and labeled significantly more pixels. We thus used aggregated SSD costs for all experiments.

\myparagraph{Regularization parameter}  The effect of $\lambda$ is shown in 
\ifTR
Fig.~\ref{fig:main1},\ref{fig:main2}(a). 
\else
Fig.~\ref{fig:main}(a). 
\fi
Larger values of $\lambda$ typically give fewer labeled pixels.
 For subsequent experiments
we fixed $\lambda=20$ (which is also the default value in the stereo package that comes with~\cite{Scharstein2002}); this value appears to work well for most of the images.

\begin{table}
{\small
\begin{center}
\begin{tabular}{|l|c|c|c|c|}
\hline
Image(\# labels) & Alahari & $k$-sub     & $k$-sub  & \% non- \\
      & et al. &           & +FastPD & persistent \\
\hline\hline
Teddy(60)      & 3423 & 320   & 1016 & 18.1 \\
Cones(60)      & 3858 & 243   & ~466  & ~6.8  \\
Tsukuba(16)    & ~519 & 254   & ~469  & 19.3 \\
Venus(20)      & ~903 & 266   & ~570  & 16.0 \\
Lampshade1(60) & 5006 & 523   & 3850 & 48.8 \\
Aloe(60)       & 2786 & 236   & ~819  & 10.5 \\
Flowerpots(60) & 5492 & 568   & 3489 & 50.5 \\
Baby1(45)      & 2766 & 285   & 1095 & 22.0 \\
\hline
\end{tabular}
\end{center}
}
\vspace{-5pt}
\caption{Runtimes (in milliseconds) and \% of unlabeled pixels.\vspace{-10pt}}
\label{table:runtime}
\end{table}

\myparagraph{Speed comparisons} The speed of different algorithms is given in Table~\ref{table:runtime}. 
$k$-sub Kovtun is approximately 10 times faster than the 'Reduce' method~\cite{Alahari:PAMI10}
(except for Venus and Tsukuba, which have fewer labels).
The fraction of non-persistent pixels ranged from 7\% to 50\%, which made the second phase
significantly faster.

We also tested how the running time of the first phase depends on the number of labels. For this experiment
we subsampled the set of allowed labels; the unary cost was set as the minimum over the interval that was merged to a given label.
Results are shown in
\ifTR
 Fig.~\ref{fig:main1},\ref{fig:main2}(b).
\else
 Fig.~\ref{fig:main}(b).
\fi
As expected, we get a larger speed-up with more labels.

It is reported in~\cite{Alahari:PAMI10} that the flow from Kovtun's computations speeds the alpha-expansion algorithm.
We were unable to replicate this in our implementation. However, we observed that initializing FastPD with the Kovtun's labeling
speeds it up compared to the ``$\ell_{\min}$-initialization'' \cite{Alahari:PAMI10}.\footnote{In this method we set $x_i\in\arg\min_a f_i(a)$. This
initialization was shown in~\cite{Alahari:PAMI10} to outperform the uniform initialization $x_i=0$.}
The average speed-up was 14.2\% 
\ifTR
(see Table~\ref{table:kiters}).
\begin{table}
\begin{center}
\begin{tabular}{|l|c|r|c|c|}
\hline
Image & $k$-sub Kovtun  & $l_{\min}$ & Speedup (\%)\\
\hline\hline
Teddy &696 &793&12.2\\
Cones &223 &261&14.6\\
Tsukuba &215 &269&20.1\\
Venus &304 &487&37.6\\
Lampshade1 & 3327 &4066&18.2\\
Aloe &299 &294&-1.7\\
Flowerpots & 2921&3257&10.3\\
Baby1&810&829&2.3\\
\hline
\end{tabular}
\end{center}
\caption{Running times of the second phase (in milliseconds) for different initializations of FastPD.
}
\label{table:kiters}
\end{table}
\else
(see suppl. material).
\fi


\myparagraph{Quality of the Kovtun's labeling}
We found that
in the majority of cases Kovtun's labeling actually has a lower error rate
compared to the alpha-expansion solution (even though the energy of the latter is better) -
\ifTR
 see Fig.~\ref{fig:main1},\ref{fig:main2}(a).
 Disparity maps are shown in Fig.~\ref{fig:vis}.
\else
 see Fig.~\ref{fig:main}(a).
 Disparity maps are shown in the suppl. material.
\fi
Since computing Kovtun's labeling requires much less computation time, we argue
that it could be used in time-critical applications.

Not surprisingly, Kovtun's labeling is more reliable in the labeled part,
i.e.\ the error rate over persistent pixels is lower compared to the rate over the entire image 
\ifTR
 (Fig.~\ref{fig:main1},\ref{fig:main2}(a)).
\else
 (Fig.~\ref{fig:main}(a)).
\fi
Thus, for applications that require higher accuracy one might use an alternative technique for
the unlabeled part.

\ifTR
\begin{figure*}
\begin{tabular}{@{\hspace{4pt}}l@{\hspace{4pt}}c@{\hspace{4pt}}c@{\hspace{4pt}}c}
                Kovtun's labeling: &  &  &\vspace{-6pt} \\
                \includegraphics[width=0.24\linewidth]{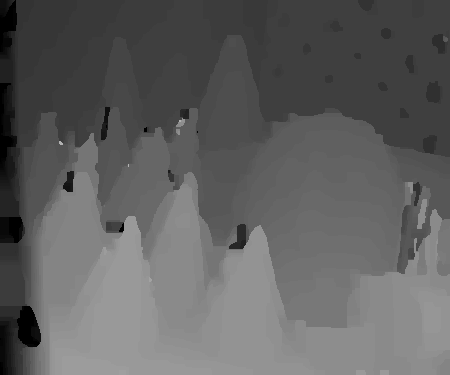} & 
                \includegraphics[width=0.24\linewidth]{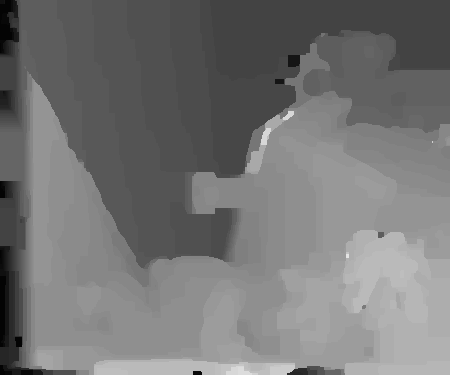} &
                \includegraphics[width=0.24\linewidth]{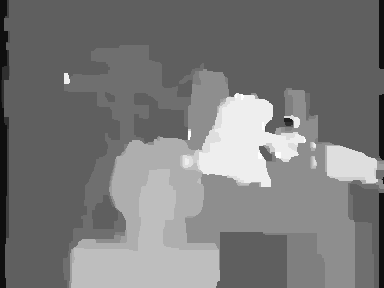} &
                \includegraphics[width=0.24\linewidth]{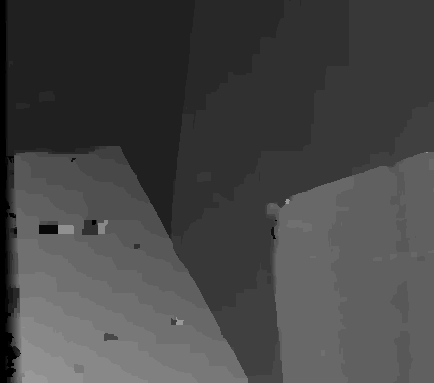} \\

                alpha expansion: &  &  &\vspace{-6pt} \\
                \includegraphics[width=0.24\linewidth]{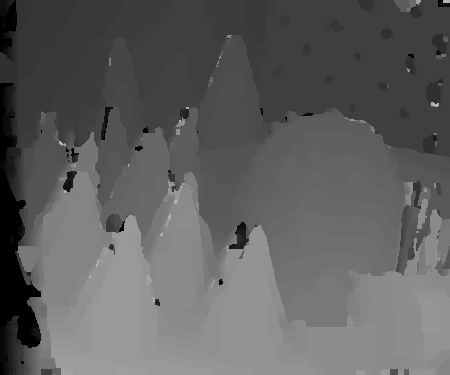} & 
                \includegraphics[width=0.24\linewidth]{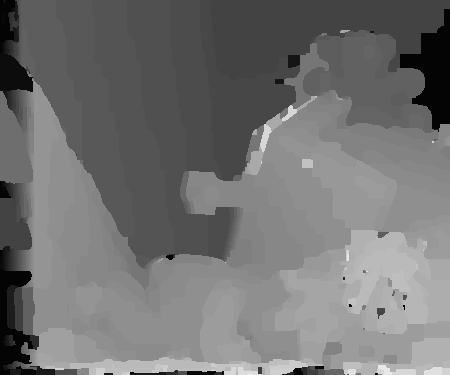} &
                \includegraphics[width=0.24\linewidth]{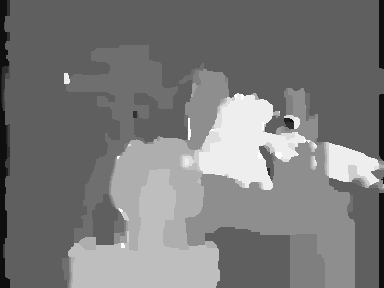} &
                \includegraphics[width=0.24\linewidth]{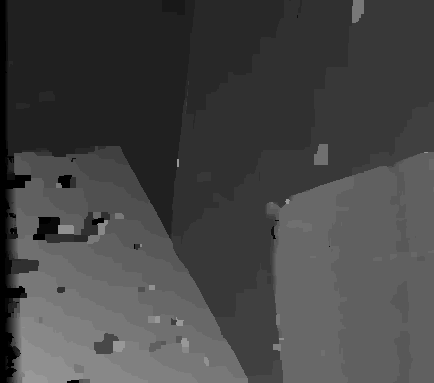} \\

                ground truth: &  &  & \vspace{-6pt} \\
                \includegraphics[width=0.24\linewidth]{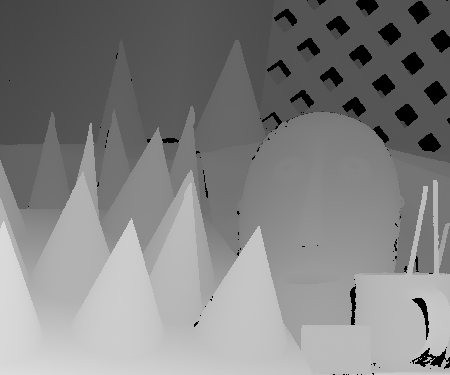} & 
                \includegraphics[width=0.24\linewidth]{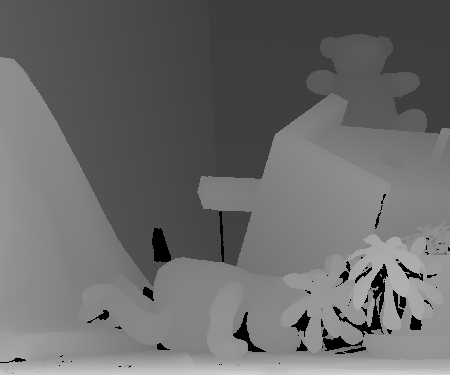} &
                \includegraphics[width=0.24\linewidth]{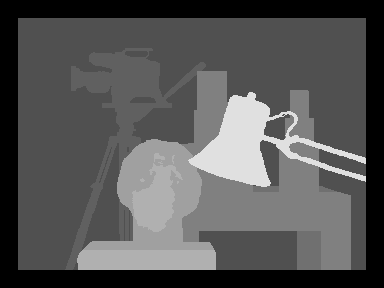} &
                \includegraphics[width=0.24\linewidth]{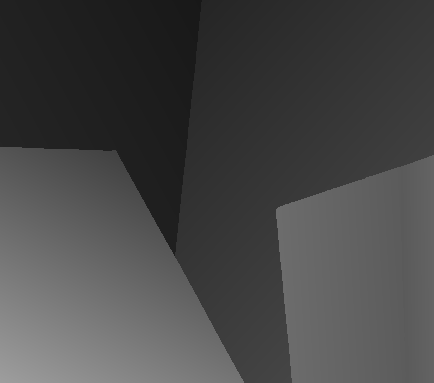} 
\end{tabular}

        \caption{Top row: output of $k$-sub Kovtun. Middle row: output of $k$-sub Kovtun+FastPD. Bottom row: ground truth.}
        \label{fig:vis}
\end{figure*}
\fi


\ifTR
\myparagraph{Maximizing persistency} We observed that applying $k$-sub Kovtun multiple times decreases 
the number of non-persistent pixels (Fig.~\ref{fig:multk}). However, the decrease is relatively modest
(0.5\%-2\%). 

Another way to increase persistency is to augment the $k$-sub Kovtun routine.
Sometimes minimum $s$-$t$ cut problems may have multiple optimal solutions;
different choices may lead to different number of labeled pixels.
Since our method also gives optimal flows, in a post-processing step
we can traverse the graph for each label (in $O(|V|+|E|)$ time)
and compute the cut that gives the largest number of labeled nodes.
We call this procedure MP (``maximize persistency''). It has the biggest effect for $\lambda=0$; the numbers are given in Table \ref{table:mp}.
The effect for $\lambda=20$ is much more modest. 

Both techniques require some extra computation time and do not offer dramatic gains, so we decided not to use them by default.

\begin{figure*}
\begin{tabular}{@{\hspace{-2pt}}c@{\hspace{4pt}}c@{\hspace{4pt}}c} 
                \includegraphics[width=0.33\textwidth]{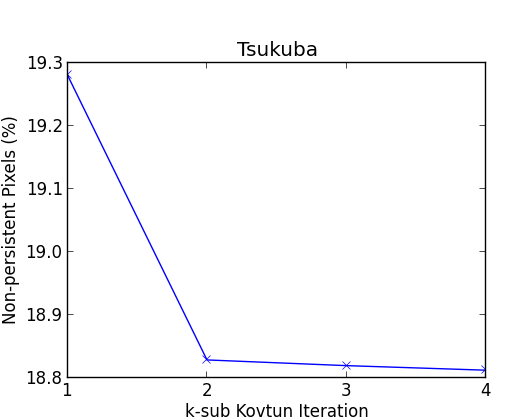} &
                \includegraphics[width=0.33\textwidth]{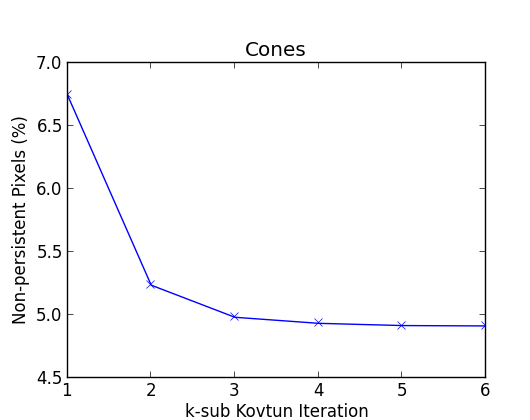} &
                \includegraphics[width=0.33\textwidth]{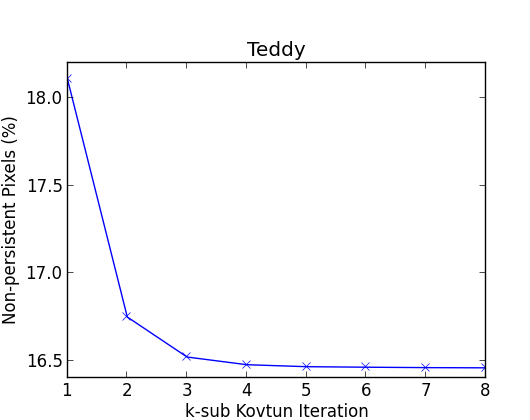} 
\end{tabular}
        \caption{Decrease in the number of non-persistent pixels with each iteration of $k$-sub Kovtun.}
        \label{fig:multk}
\end{figure*}

\begin{table}
\begin{center}
\begin{tabular}{|l|c|r|c|}
\hline
Image & $k$-sub& $k$-sub+MP & \% non-pers.\\
\hline\hline
Teddy & 179 & 230 & 2.1\\
Cones & 180 & 196 & 0.5\\
Tsukuba &  74 & 117 & 12.3\\ 
Venus & 124 & 166 & 4.7\\
Lampshade1 & 168 & 593 & 20.9 \\
Aloe & 175 & 190 & 0.5\\
Flowerpots & 180 & 619 & 15.2\\
Baby1 & 140 & 285 & 5.5\\
\hline
\end{tabular}
\end{center}
\caption{Effect of the MP procedure (``maximize persistency'') for $\lambda=0$.
The last column is the percentage of the non-persistent pixels given by $k$-sub Kovtun;
MP decreases this to 0.
}
\label{table:mp}
\end{table}

\fi

\vspace{-7pt}
\section{Conclusions}\label{sec:conclusion}
\vspace{-3pt}
We see the contributions of this work as two-fold.
On the practical side, we showed how to improve the running time for the frequently
used Potts model. 
We tested it on the stereo problem (partly because there is an established dataset for that),
but we expect similar speed-ups for segmentation problems where labels correspond
to different semantic classes. If the number of persistent pixels is low for a given application then one
could use the cost aggregation trick to get more discriminative unary functions; as we saw for stereo,
this only improves the accuracy. For time-critical applications one could potentially skip the second phase
and use the Kovtun's labeling as the final output.

On the theoretical side, we introduced several concepts (such as $k$-submodular relaxations)
that may turn out to be useful for other energy functions. We hope that these concepts could lead to new directions
for obtaining partially optimal solutions for MAP-MRF inference.

\section*{Acknowledgements} We thank the authors of \cite{Alahari:PAMI10} for answering questions
about their implementation.


\renewcommand\refname{\large References\vspace{-2pt}}

{\small
\bibliographystyle{ieee}
\bibliography{vnk}
}

\end{document}